\newcommand{\alg}{\text{DR-SIT}\xspace}
\newcommand{\dataset}{\mathsf{D}}
\newcommand{\expect}[2]{\mathbb{E} _{#1} \left [ #2 \right ]}
\newcommand{\feat}{m}
\newcommand{\nsr}{\textbf{\textsc{NSR}}}
\newcommand{\pr}[1]{ \mathbb{P} \left ( #1 \right )}
\let\oldding\ding
\renewcommand{\ding}[2][1]{\scalebox{#1}{\oldding{#2}}}
\theoremstyle{plain}
\newtheorem{theorem}{Theorem}[section]
\newtheorem{lemma}[theorem]{Lemma}
\theoremstyle{definition}
\newtheorem{definition}[theorem]{Definition}
\theoremstyle{remark}
\title{Double Machine Learning Based Structure Identification from
Temporal Data}
\author{\name Emmanouil Angelis\thanks{Equal contribution.} \email emmanouil.angelis@helmholtz-munich.de \\
      \addr Helmholtz AI, Helmholtz Center Munich\\
      Technincal University of Munich
      \ANDONE
      \name Francesco Quinzan\footnotemark[1] \email francesco.quinzan@eng.ox.ac.uk  \\
      \addr Department of Engineering Science \\ University of Oxford
      \ANDONE
      \name Ashkan Soleymani \email ashkanso@mit.edu \\
      \addr Department of Electrical Engineering and Computer Science \\ Massachusetts Institute of Technology
      \ANDONE
      \name Patrick Jaillet \email jaillet@mit.edu \\
      \addr Department of Electrical Engineering and Computer Science \\ Massachusetts Institute of Technology
      \ANDONE
      \name Stefan Bauer \email st.bauer@tum.de \\
      \addr Helmholtz AI, Helmholtz Center Munich\\
      Technincal University of Munich
}
\begin{document}

\maketitle

\begin{abstract}
Learning the causes of time-series data is a fundamental task in many applications, spanning from finance to earth sciences or bio-medical applications. Common approaches for this task are based on vector auto-regression, and they do not take into account unknown confounding between potential causes. However, in settings with many potential causes and noisy data, these approaches may be substantially biased. Furthermore, potential causes may be correlated in practical applications or even contain cycles. To address these challenges, we propose a new double machine learning based method for structure identification from temporal data (DR-SIT). We provide theoretical guarantees, showing that our method asymptotically recovers the true underlying causal structure. Our analysis extends to cases where the potential causes have cycles, and they may even be confounded. We further perform extensive experiments to showcase the superior performance of our method. Code: \url{https://github.com/sdi1100041/TMLR_submission_DR_SIT}

\end{abstract}

\section{Introduction}
One of the primary objectives when working with time series data is to uncover the causal structures between different variables over time. Learning these causal relations and their interactions is of critical importance in many disciplines, e.g., healthcare~\citep{anwar2014multi}, climate studies~\citep{stips2016causal, runge2019inferring}, epidemiology~\citep{hernan2000marginal, robins2000marginal}, finance~\citep{hiemstra1994testing}, ecosystems~\citep{sugihara2012detecting}, and many more. Interventional data is not often accessible in many of these applications. For instance, in healthcare scenarios, conducting trials on patients may raise ethical concerns, or in the realm of earth and climate studies, randomized controlled trials are not feasible.

In general, understanding the underlying causal graph using only observational data is a cumbersome task due to many reasons: i) observational data, as opposed to interventional data, capture correlation-type relations instead of cause-effect ones. ii) unobserved confounders introduce biases and deceive the algorithms to falsely infer causal relations instead of the true structure, e.g., the existence of a hidden common confounder iii)  the number of possible underlying structures grows super exponentially with the number of variables creating major statistical and computation barriers iv) the identifiability problem, since multiple causal models can result in the same observational distribution, thus making it impossible to uniquely determine the true structure. To overcome these problems and determine the underlying structure, additional assumptions are imposed. Typical assumptions include faithfulness, linearity of relations, or even noise-free settings, which limit the types of causal relationships that can be discovered~\citep{pearlj,peters2017elements,spirtes2000causation,glymour2019review}. Almost all of these challenges extend to the problem of identifying the underlying causal structure from observational time-series datasets.

Subsequently, in many instances, the emphasis is placed on particular target variables of interest and their causal features. Causal features of a target are defined as the set of variables that conditioned on them, the target variable is independent of the rest variables. Causal feature selection enables training models which are much simpler, more interpretable, and more robust~\citep{aliferis2010local,janzing2020feature}. However, learning the causal structures between variables and a specific target is still a demanding task, and many current approaches for causal feature selection face limitations by making unrealistic simplifying assumptions about the data-generating process or by lacking computational and/or statistical scalability~\citep{yu2021unified,yu2020causality}. These challenges become particularly amplified in the context of time series data, where the number of variables grows linearly with the length of the data trajectories. As a result, to mitigate these problems, additional assumptions, e.g., stationarity or no hidden confounders are included~\citep{moraffah2021causal,bussmann2021neural,runge2018causal} and/or weaker notions of causality\footnote{weaker than Pearl’s structural equation model~\citep{pearlj}.} such as Granger causality have been studied extensively 
~\citep{granger1988some,granger1969investigating,marinazzo2011nonlinear,tank2018neural,bussmann2021neural,khanna2019economy,runge2018causal,hasan2023survey}.

Existing causal feature selection from time-series data algorithms often assume some level of faithfulness or causal sufficiency~\citep{runge2019detecting,runge2018causal,runge2020discovering}. Oftentimes, they overlook the presence of unknown confounding factors among potential causes~\citep{moraffah2021causal}. Moreover, most cannot adapt to cyclic settings~\citep{entner2010causal}, which is relatively ubiquitous in many domains~\citep{bollen1989structural}. Furthermore, many algorithms employ the popular vector auto-regression framework to model time-dependence structures~\citep{bussmann2021neural,lu2016integrating,chen2009granger,weichwald2020causal,hyvarinen2010estimation}, which again is restrictive. To overcome these problems, we propose an efficient algorithm for doubly robust structure identification from temporal data.

\noindent\textbf{Our contribution.}
\begin{enumerate} [leftmargin=4mm,itemsep=-1ex,topsep=0pt]
    \item We provide an efficient and easy-to-implement doubly robust structure identification from temporal data algorithm (DR-SIT) with theoretical guarantees enjoying $\sqrt{n}$-consistency.
      \item We provide an extensive technical discussion relating Granger causality to Pearl’s framework for time series and show under which assumptions our approach can be used for feature selection or full causal discovery. As a consequence of this, ours is the first paper to propose a non-parametric Granger causality test that achieves the semi-parametric $\sqrt{n}$-rate. {\color{black} We remark, however, that the same semi-parametric rate was previously established for the related notion of Conditional Local Independence by \citet{Christgau2022CLI}.}
    \item {\color{black}We provide theoretical insights showing that our algorithm allows for general non-linear cyclic structures and hidden confounders among the covariates, while only requiring faithfulness within the parental subgraph of the target. In particular, faithfulness outside this local subgraph is not necessary in the context of local causal discovery, where the goal is to identify the direct causes of the target.}
    \item In extensive experiments we illustrate that our approach is significantly more robust, significantly faster, and more performative than state-of-the-art baselines. 
\end{enumerate}

\section{Related Work}
\label{app:related_work}

\noindent\textbf{Causal Structure Learning for Timeseries} 
A longstanding line of work intends to tailor the existing causal structure learning and Markov blanket discovery for i.i.d. data to the temporal setting. To name a few, \citet{entner2010causal} adapted the Fast Causal Inference algorithm~\citep{spirtes2000causation} to time-series data. While the approach shares the benefit of being able to deal with hidden confounders, it is not possible to account for cyclic structures. \citet{runge2019detecting} introduced PCMCI, as an adjusted version of PC~\citep{spirtes2000causation} with an additional false positive control phase which is able to recover time-lagged causal dependencies. PCMCI+ modified the approach further to additionally be able to find contemporaneous causal edges~\citep{runge2020discovering}. LPCMCI extends the scope by catering to the case of hidden confounders~\citep{gerhardus2020high}. Even though methods in this category are able to provide theoretical guarantees for learning the causal structure, DR-SIT has several advantages over them: i) For all of these methods, the faithfulness assumption is a key ingredient while DR-SIT does not need it. ii) These methods are based on conditional independence testing which is widely recognized to be a cumbersome statistical problem ~\citep{bergsma2004testing,kim2022local}. \citet{shah2020hardness} have established that no conditional independence (CI) test can effectively control the Type-I error for all CI settings. Moreover in practice, conducting many conditional independence tests from lengthy time-series is burdensome. iii) Even having access to a perfect conditional independence test oracle, severe computational challenges exist~\citep{chickering1996learning,chickering2004large}\footnote{Learning Bayesian Networks with conditional independence tests is NP-Hard.} (please refer to~\cref{app:comp} for a detailed comparison). 

Another line of work relies heavily on the vector auto-regression framework. VarLiNGAM~\citep{hyvarinen2010estimation} generalizes LiNGAM~\citep{shimizu2006linear} to time-series and similar to that it assumes a linear non-Gaussian acyclic model for the data. In the work of~\citet{huang2019causal}, a time-varying linear causal model is assumed, allowing for causal discovery even in the presence of hidden confounders. More recently, deep neural networks are used to train vector auto-regression. \citet{tank2018neural}  adapted neural networks (named cMLP and cLSTM) for Granger causality by imposing group-sparsity regularizers. In a similar fashion, \citet{khanna2019economy} used recurrent neural networks. In another work, \citet{bussmann2021neural} designed neural additive vector auto-regression (NAVAR), a neural network approach to model non-linearities. In contrast to previous works, they extract Granger-type causal relations by injecting the necessary sparsity directly into the architecture of the neural networks. This line of work is quite limited to ours as they consider confining structural assumptions over the underlying causal structural equations; details on these assumptions are discussed next to \ref{cond:1} in \cref{sec:framework}.

\noindent\textbf{Double Machine Learning} The use of double robustness in causality problems has a long history mainly concentrated on estimating average treatment effect~\citep{robins1994estimation, funk2011doubly, benkeser2017doubly,bang2005doubly,sloczynski2018general}. ~\citet{10.1111/ectj.12097} introduced the DML framework to achieve double robustness for structural parameters. Upon that, \citet{korth} defined an orthogonalized score to infer the direct causes of partially linear models. Their approach is fast and allows for the assumption of complicated underlying structures but unfortunately, it is limited to only linear direct causal effects. While this was later extended to the non-linear case~\citep{quinzan2023drcfs}, we propose a doubly robust approach for identifying causal structures from temporal data under the general assumptions discussed in \cref{sec:framework}.

{\color{black}
%
\noindent\textbf{Conditional Local Independence.} While our focus is on causal discovery for discrete time series, it is helpful to discuss the related notion of Conditional Local Independence (CLI). CLI is an asymmetric, continuous-time notion of (non-)influence: $Y$ is conditionally locally independent of $X$ (given other histories) when augmenting the filtration with the history of $X$ does not change the predictable intensity (compensator) of $Y$. Foundational work introduced local-independence graphs and an appropriate separation criterion to represent such statements for point processes and related continuous-time models \citep{Didelez2007CFMP,Didelez2008LocalIndependence}. Subsequent theory developed graphical representations that remain meaningful under marginalization (latents), via directed mixed graphs and $\mu$-separation, thereby characterizing the Markov equivalence classes identifiable from observations \citep{MogensenHansen2020}. Related learning procedures for partially observed stochastic dynamical systems were proposed in \citet{MogensenMalinskyHansen2018}. Most relevant to testing, \citet{Christgau2022CLI} provide a model-free (nonparametric) CLI testing framework for counting-process targets: they define the Local Covariance Measure (LCM), estimate it with cross-fitted orthogonalized moments, and derive an $(\mathrm{X})$-Local Covariance Test ((X)-LCT) with uniform control of size and power under modest estimation rates. It is important to distinguish, however, that CLI concerns instantaneous (intensity-level) influence in continuous time. By contrast, the Granger-causal target in this paper is a discrete-time predictive notion: whether past values of $X$ improve prediction of $Y$ beyond the past of $Y$ and other covariates. The two notions answer related questions, but they are not interchangeable.
%
%
}

\section{Framework}
\label{sec:framework}
\subsection{Problem Description}

\begin{figure*}[t]
  \centering
  \resizebox {0.9\textwidth} {!} {
    \begin{tikzpicture}[node distance=10mm and 15mm, main/.style = {draw, circle, minimum size=0.8cm, inner sep=1pt}, >={triangle 45}] 

    \node[main] (1)               {$\scriptstyle X_0^1$};
    \node[main] (2) [right =of 1] {$\scriptstyle X_1^1$}; 
    \node[main] (3) [right =of 2] {$\scriptstyle X_2^1$}; 
    \node[main] (4) [right =of 3] {$\scriptstyle X_3^1$}; 
    \node[main] (5) [right =of 4] {$\scriptstyle X_4^1$}; 
    \node[main] (6) [right =of 5] {$\scriptstyle X_5^1$}; 
    \node (0) [left  =of 1] {};   
    \node (7) [right =of 6] {};

    \node[main] (11) [below =of 1]  {$\scriptstyle Y_0$};
    \node[main] (12) [right =of 11] {$\scriptstyle Y_1$};
    \node[main] (13) [right =of 12] {$\scriptstyle Y_2$};
    \node[main] (14) [right =of 13] {$\scriptstyle Y_3$};
    \node[main] (15) [right =of 14] {$\scriptstyle Y_4$};
    \node[main] (16) [right =of 15] {$\scriptstyle Y_5$};
    \node       (10) [left  =of 11] {};
    \node       (17) [right =of 16] {};

    \node[main] (21) [below =of 11] {$\scriptstyle X_0^2$};
    \node[main] (22) [right =of 21] {$\scriptstyle X_1^2$};
    \node[main] (23) [right =of 22] {$\scriptstyle X_2^2$};
    \node[main] (24) [right =of 23] {$\scriptstyle X_3^2$};
    \node[main] (25) [right =of 24] {$\scriptstyle X_4^2$};
    \node[main] (26) [right =of 25] {$\scriptstyle X_5^2$};
    \node       (20) [left  =of 21] {};
    \node       (27) [right =of 26] {};

    \node[main] (31) [below =of 21, dotted] {$\scriptstyle U_0$};
    \node[main] (32) [right =of 31, dotted] {$\scriptstyle U_1$};
    \node[main] (33) [right =of 32, dotted] {$\scriptstyle U_2$};
    \node[main] (34) [right =of 33, dotted] {$\scriptstyle U_3$};
    \node[main] (35) [right =of 34, dotted] {$\scriptstyle U_4$};
    \node[main] (36) [right =of 35, dotted] {$\scriptstyle U_5$};
    \node       (30) [left  =of 31]         {};
    \node       (37) [right =of 36]         {};

    \node       (40) [right =of 7]          {};
    \node[main] (41) [right =of 40]         {$\scriptstyle X^1$};
    \node[main] (42) [below =of 41]         {$\scriptstyle Y$  };
    \node[main] (43) [below =of 42]         {$\scriptstyle X^2$};
    \node[main] (44) [below =of 43, dotted] {$\scriptstyle U$  };

    \draw[->] (11) to [bend right] (13);
    \draw[->] (12) to [bend right] (14);
    \draw[->] (13) to [bend right] (15);
    \draw[->] (14) to [bend right] (16);

    \draw[->] (11) -- (2);
    \draw[->] (12) -- (3);
    \draw[->] (13) -- (4);
    \draw[->] (14) -- (5);
    \draw[->] (15) -- (6);
    
    \draw[->] (0) -- (1);
    \draw[->] (1) -- (2); 
    \draw[->] (2) -- (3);
    \draw[->] (3) -- (4);
    \draw[->] (4) -- (5);
    \draw[->] (5) -- (6);
    \draw[->] (6) -- (7);

    \draw[<->] (1) to [bend left] (21); 
    \draw[<->] (2) to [bend left] (22); 
    \draw[<->] (3) to [bend left] (23); 
    \draw[<->] (4) to [bend left] (24); 
    \draw[<->] (5) to [bend left] (25); 
    \draw[<->] (6) to [bend left] (26); 
    
    \draw[->] (1) -- (12); 
    \draw[->] (2) -- (13); 
    \draw[->] (3) -- (14);
    \draw[->] (4) -- (15);
    \draw[->] (5) -- (16);

    \draw[->] (10) -- (11);
    \draw[->] (11) -- (12); 
    \draw[->] (12) -- (13);
    \draw[->] (13) -- (14);
    \draw[->] (14) -- (15);
    \draw[->] (15) -- (16);
    \draw[->] (16) -- (17);

    \draw[->] (20) -- (21);
    \draw[->] (21) -- (22); 
    \draw[->] (22) -- (23);
    \draw[->] (23) -- (24);
    \draw[->] (24) -- (25);
    \draw[->] (25) -- (26);
    \draw[->] (26) -- (27);

    \draw[->] (21) -- (13); 
    \draw[->] (22) -- (14);
    \draw[->] (23) -- (15);
    \draw[->] (24) -- (16);

    \draw[->, dotted] (30) -- (31);
    \draw[->, dotted] (31) -- (32); 
    \draw[->, dotted] (32) -- (33);
    \draw[->, dotted] (33) -- (34);
    \draw[->, dotted] (34) -- (35);
    \draw[->, dotted] (35) -- (36);
    \draw[->, dotted] (36) -- (37);

    \draw[->, dotted] (31) -- (21); 
    \draw[->, dotted] (32) -- (22);
    \draw[->, dotted] (33) -- (23);
    \draw[->, dotted] (34) -- (24);
    \draw[->, dotted] (35) -- (25);
    \draw[->, dotted] (36) -- (26);
    
    \draw[->, dotted] (31) to [bend left] (1); 
    \draw[->, dotted] (32) to [bend left] (2);
    \draw[->, dotted] (33) to [bend left] (3);
    \draw[->, dotted] (34) to [bend left] (4);
    \draw[->, dotted] (35) to [bend left] (5);
    \draw[->, dotted] (36) to [bend left] (6);

    \draw[->]         (41) to (42);
    \draw[->]         (43) to (42);
    \draw[<->] (41) to [bend left] (43);
    \draw[->, dotted] (44) to [bend left] (41);
    \draw[->, dotted] (44) to (43);
    
  \end{tikzpicture}%
  }
  \caption{
    Example of a full-time graph (left), and the corresponding summary graph (right). Note that time series $\boldsymbol{X}^1$ and $\boldsymbol{X}^2$ causally influence the outcome $\boldsymbol{Y}$ with different lags. {\color{black}Note also that our framework allows for auto correlative lags, as well as lagged causal effects from the target to any of the poential causes.} The time series $\boldsymbol{U}$ is unobserved, and it acts as a confounder for $\boldsymbol{X}^1$ and $\boldsymbol{X}^2$. Moreover, there is a cycle between the confoundings variables $\boldsymbol{X}^1$ and $\boldsymbol{X}^2$ of the outcome variable $\boldsymbol{Y}$.
}
\label{fig:graph}
\end{figure*}
{\color{black}We are given given i.i.d. realizations of a joint time series $(\boldsymbol{Y}, \boldsymbol{X})$ observed over time, where $\boldsymbol{Y} \coloneqq \{Y_t\}_{t \in \mathbb{Z}}$ is a univariate target time series and $\boldsymbol{X} \coloneqq \{X_t^1, \dots, X_t^m\}_{t \in \mathbb{Z}}$ is a multivariate time series of potential causes.}
We assume that the {\color{black}target} time series is specified by some of the potential causes by a deterministic function with posterior additive noise, and no instantaneous effects. We can formalize this model as
\begin{enumerate} [leftmargin=20mm,label={Axiom (\Alph*)},itemsep=0pt,topsep=0pt]
   \item \label{cond:1} $Y_T = f(\mathsf{pa}_T(\boldsymbol{Y}), T) + \varepsilon_T$ {\color{black}for all time steps $T\in \mathbb{Z}$},
\end{enumerate}
with $\varepsilon_T$ exogenous independent noise and $\mathsf{pa}_T(\boldsymbol{Y}) \subseteq \{X_t^1, \dots, X_t^m\}_{t \in \mathbb{Z}}$ is a subset of random variables of the multivariate time series $\boldsymbol{X}$. Note that the independence of the additive noise is important for identifiability. In fact, if there are dependencies between the noise and the history, then one might run into identifiability problems. We refer the reader to Appendix \ref{sec:counterexample} for a counterexample. We are interested in identifying the time series $\boldsymbol{X}^i$ that directly affects the outcome $Y$. That is, we wish to identify time series $\boldsymbol{X}^i$ such that it holds $X^i_{t} \subseteq \mathsf{pa}_T(\mathbf{Y})$ for some time steps $t, T$. 

We further use the following assumptions:
\begin{enumerate}[leftmargin=20mm,label={Axiom (\Alph*)},itemsep=-1ex,topsep=0pt]\setcounter{enumi}{1}
\item \label{cond:2} there are no causal effects backward in time{\color{black}. Specifically, $X_t^i\notin \mathsf{pa}_T(\mathbf{Y})$, for all $i= 1, \dots, m$ and for all time steps $t, T\in \mathbb{Z}$ with $t>T$};
\item \label{cond:3} there are no instantaneous causal effect between $\boldsymbol{Y}$ and any of the potential causes $\boldsymbol{X}^i${\color{black}, i.e., $X_T^i\notin \mathsf{pa}_T(\mathbf{Y})$, for all $i= 1, \dots, m$ and for all time steps $T\in \mathbb{Z}$}.
\end{enumerate}
{\color{black}Note that according to \ref{cond:3}, instantaneous effects are allowed between potential causes $\boldsymbol{X}^i$ and $\boldsymbol{X}^j$, as illustrated, for example, in Fig. \ref{fig:graph}.} Both \ref{cond:2} and \ref{cond:3} appear in previous related work (see, e.g., \citet{DBLP:conf/nips/PetersJS13,DBLP:conf/icml/MastakouriSJ21,DBLP:conf/clear2/LoweMSW22}). Note that these axioms allow for cycles and hidden common confounders between the potential causes. \ref{cond:2} is a natural assumption as a system is called causal when the output of the system only depends on the past, not the future~\citep{peters2017elements,pearlj}. \ref{cond:3} poses additional restrictions on the class of models that we consider, since instantaneous effects may be relevant in some cases and applications \citep{DBLP:journals/corr/abs-2206-06169}\footnote{An example of cases where time series exhibit instantaneous causal effects is given by dynamical systems~\citep{mogensen2018causal,rubenstein2016deterministic}. In dynamical systems, a variable may instantaneously affect another variable of the model. Instantaneous effects have been studied in previous related work~\citep{DBLP:journals/corr/abs-2210-14706} but due to identifiability issues, they rely on stronger assumptions such as faithfulness.}. However, without this assumption, it is impossible to learn causes \emph{from observational data}. The necessity of \ref{cond:3} for causal discovery is a well-known fact \citep{DBLP:conf/nips/PetersJS13}\footnote{In general, \citet{DBLP:conf/nips/PetersJS13} show that causal discovery is impossible with instantaneous effects. Please refer to~\cref{app:instantaneous} for an example of this non-identifiability. However, \citet{DBLP:conf/nips/PetersJS13} also provide a special case in which the causal structure is identifiable with instantaneous effects. This special case occurs when the random variables of the model are jointly Gaussian, and the instantaneous effects are linear. Our framework extends to this special case.}.

\subsection{Generality over Previous Work}
Our framework retains some degree of generality over previous related work. In fact, \ref{cond:1}-\ref{cond:3} allow for hidden confounding and cycles between the potential causes (see Figure \ref{fig:graph}), providing a more general framework than the full autoregressive model studied, e.g., by \citet{DBLP:journals/jmlr/HyvarinenZSH10,DBLP:conf/nips/PetersJS13,DBLP:conf/clear2/LoweMSW22,DBLP:journals/corr/abs-2001-01885}. Furthermore, in contrast to several previous works \citep{khemakhem2020variational,gresele2021independent,lachapelle2022disentanglement,lippe2022citris,yao2021learning}, we do not assume that
the variables $Y_T, X_T^1, \dots, X_T^m$ are independent conditioned on the observed variables at previous time steps. Importantly, we also do not assume causal faithfulness in the full time graph, or weaker notions such as causal minimality.\footnote{Recall that a distribution is faithful to a causal diagram if no conditional independence relations are present, other than the ones entailed by the Markov property.\label{fn:faithfulness}} This is a major improvement over some previous works, e.g., \citet{DBLP:conf/icml/MastakouriSJ21,DBLP:journals/corr/abs-2210-14706}, since there is no reason to assume that faithfulness or causal minimality hold in practice. Theorem 1 by \citet{DBLP:journals/corr/abs-2210-14706} provides an identifiability result for a model with history-dependent noise and instantaneous effects. This result, however, requires causal minimality.

Furthermore, as discussed in ~\cref{app:related_work},  vector auto-regression methods enforce heavy structural assumptions on the underlying causal structural equations. NAVAR~\citep{bussmann2021neural} assumes that each variable is influenced by its causal parents exclusively in an additive way and higher-order interactions among them are precluded. In mathematical terms, they follow $Y_T = \beta + \sum_{X \in \mathsf{pa}_T(\boldsymbol{Y})}^{N} f_{X} (X_{t - \kappa:t-1}) + \varepsilon_t$, where $\beta$ is a bias term, $\kappa$ is the time lag, and $\varepsilon_t$ is an independent noise variable. 
{\color{black}VarLiNGAM \citep{hyvarinen2010estimation} imposes considerably stricter constraints on the structural equations' functional form {\color{black} than our framework}, restricting $f_X$ to be a linear transformation of $X_{t - \kappa:t-1}$. As a result, it cannot capture even simple nonlinear relationships - e.g., $Y_t = X_{t - 1}^{1} \times X_{t - 1}^{2}$ or $Y_t = \log (X_{t - 1}^{1} + X_{t - 1}^{2})$ - which are well within the scope of \ref{cond:1}. In contrast, our approach flexibly models arbitrary interactions between covariates and the timestep $T$, marking a significant advancement over prior methods.}

\subsection{Causal Structure}
We are interested in direct causal effects, which are defined by distribution changes due to interventions on the DGP. An intervention amounts to actively manipulating the generative process of a potential cause $\boldsymbol{X}^i$ at some time step $t$, without altering the remaining components of the DGP. Then, a time series $\boldsymbol{X}^i$ has a direct effect on $\boldsymbol{Y}$ if performing an \emph{intervention} on some temporal variable $X^i_t$ will alter the distribution of $Y_T$, for some time steps $t$, $T$.

{\color{black}
We consider interventions by which a random variable $X^j_t$ is set to a constant $X^j_t \gets x$. We denote with $Y_{T} \mid do(X^i_t = x)$ the outcome time series $\boldsymbol{Y}$ at time step $T$, after performing an intervention as described above. We can likewise perform multiple joint interventions, by setting a group of random variables $\boldsymbol I $ at different time steps, to pre-determined constants specified by an array $\boldsymbol i$. We use the symbol $Y_T \mid do(\boldsymbol I = \boldsymbol i)$ to denote the resulting post-interventional outcome, and we denote with $\pr{Y_T = y \mid do(\boldsymbol I = \boldsymbol i)}$ the probability of the event $\{Y_T \mid do(\boldsymbol I = \boldsymbol i) = y\}$.

Using this notation, a time series $\boldsymbol{X}^i$ has a direct causal effect on the outcome $\boldsymbol{Y}$, if performing different interventions on the variables $\boldsymbol{X}^i$, while keeping the remaining variables fixed, will alter the probability distribution of the outcome $\boldsymbol{Y}$. Formally, define the sets of random variables ${\color{black}\boldsymbol{I}_{<T}}\coloneqq \{X_{t}^1, \dots, X_{t}^n , Y_t\}_{ t < T}$, which consists of all the information before time step $T$. Similarly, define the random variable ${\color{black}\boldsymbol{X}^i_{<T}} \coloneqq \{X^i_t\}_{t < T}$, consisting of all the information of time series $\boldsymbol{X}^i$ before time step $T$. Define the variable ${\color{black}\boldsymbol{I}_{<T}^{\setminus i}}\coloneqq {\color{black}\boldsymbol{I}_{<T}} \setminus {\color{black}\boldsymbol{X}^i_{<T}}$, which consists of all the variables in ${\color{black}\boldsymbol{I}_{<T}}$ except for ${\color{black}\boldsymbol{X}^i_{<T}}$. Then, a time series $\boldsymbol{X}^i$ has a direct effect on the outcome $ \boldsymbol{Y}$ if it holds
\begin{equation}
\label{eq:direct_effect}   
\pr {Y_T = y \mid do \left ({\color{black}\boldsymbol{X}^i_{<T}} = \boldsymbol{x}' , {\color{black}\boldsymbol{I}_{<T}^{\setminus i}} =  \boldsymbol{i} \right )} \neq \pr {Y_T = y \mid do \left ({\color{black}\boldsymbol{X}^i_{<T}} = \boldsymbol{x}'' , {\color{black}\boldsymbol{I}_{<T}^{\setminus i}} =  \boldsymbol{i} \right )}
\end{equation}
We say that a time series $\boldsymbol{X}^i$ causes $\boldsymbol{Y}$, if there is a direct effect between $\boldsymbol{X}^i$ and $\boldsymbol{Y}$ as in Eq. \ref{eq:direct_effect}, for any time step $T$.
}

Following, e.g., \citet{DBLP:conf/icml/MastakouriSJ21}, we define the \emph{full time} graph $\mathcal{G}$ as a directed graph whose edges represent all direct causal effects among the variables at all time steps. Given the outcome $Y_T$ at a given time step, we refer to the parent nodes in the full time graph as its \emph{causal parents}. We further define the \emph{summary} graph whose nodes are $\boldsymbol{X}^i$ and $\boldsymbol{Y}$, and with directed edges representing causal effects between the time series. We refer the reader to Figure \ref{fig:graph} for a visualization of these graphs. Note that the causes of $\boldsymbol{Y}$ correspond to the parent nodes of $\boldsymbol{Y}$ in the summary graph. {\color{black}In this work, we always assume that the Markov property holds (see, e.g., \citet{peters2017elements}).\footnote{{\color{black}Recall that the distribution of the DGP fulfills the Markov property if each variable in the graph $\mathcal{G}$ is conditionally independent of its non-non-descendants, given its causal parents.}}}
\subsection{Granger Causality}
\label{sec:granger}
Granger causality \citep{granger1988some} is one of the most commonly used approaches to infer causal relations from observational time-series data. Its central assumption is that \say{cause-effect relationships cannot work against time}. Informally, if the prediction of the future of a target time-series $\boldsymbol{Y}$ can be improved by knowing past elements of another time-series $\boldsymbol{X}^i$, then $\boldsymbol{X}^i$ \say{Granger causes} $\boldsymbol{Y}$. Formally, we say that $\boldsymbol{X}^i$ Granger causes $\boldsymbol{Y}$ if it holds
\begin{equation}
\label{eq:def_granger}   
\pr {Y_T=y \mid {\color{black}\boldsymbol{X}^i_{<T}} = \boldsymbol{x} , {\color{black}\boldsymbol{I}_{<T}^{\setminus i}} = \boldsymbol i } \neq \pr { Y_T =y\mid  {\color{black}\boldsymbol{I}_{<T}^{\setminus i}} = \boldsymbol i } \footnote{{\color{black}This formulation assumes a discrete target variable $Y_T$ for notational simplicity. The definition extends naturally to continuous random variables by comparing conditional distributions or using conditional densities where they exist. We refer the reader, e.g., to \citet{shojaie2022granger} for a comprehensive discussion.}}
\end{equation}
for a non-zero probability event $\{{\color{black}\boldsymbol{X}^i_{<T}} = \boldsymbol{x} , {\color{black}\boldsymbol{I}_{<T}^{\setminus i}} = \boldsymbol i\}$, where ${\color{black}\boldsymbol{I}_{<T}}$ stands for the set $\{{\color{black}\boldsymbol{X}^1_{<T}}, {\color{black}\boldsymbol{X}^2_{<T}}, \dots, {\color{black}\boldsymbol{X}^m_{<T}}\}$ and ${\color{black}\boldsymbol{I}_{<T}^{\setminus i}}$ represents the set ${\color{black}\boldsymbol{I}_{<T}} \setminus \{{\color{black}\boldsymbol{X}^i_{<T}}\}$. 

Granger causality is commonly used to identify causes. Assuming stationary, multivariate Granger causality analysis usually fits a vector autoregressive model to a given dataset. This model can be then used to determine the causes of a target $\boldsymbol{Y}$. However, it is important to note that Granger causality does not imply true causality in general. This limitation was acknowledged by Granger himself in \citet{granger1988some}.

\section{Methodology}
\label{sec:method}
\subsection{Double Machine Learning (DML)}
DML is a general framework for parameter estimation, which uses debiasing techniques to achieve $\sqrt{n}$-consistency (see, e.g., \citet{10.1093/biomet/asaa054,DBLP:conf/icml/ChernozhukovNQS22}). In DML, we consider the problem of estimating a parameter $\theta_0$ as a solution of an equation of the form $\expect{}{\mathcal{L}(\theta_0, \boldsymbol \eta_0)} =0$. The score function $\mathcal{L}$ depends on two terms, the true parameter $\theta_0$ that we wish to estimate, and a nuisance parameter $\boldsymbol \eta_0$. We do not directly care about the correctness of our estimate of $\boldsymbol \eta_0$, as long as we get a good estimator of $\theta_0$. The nuisance parameter $\eta_0$ may induce an unwanted bias in the estimation process, resulting in slow convergence. To overcome this problem, we use score functions that fulfill the Mixed Bias Property (MBP) \citep{10.1093/biomet/asaa054}, and learn the desired parameter $\theta_0$ using DML.

The MBP is a property that ensures that small changes of the nuisance parameter do not significantly affect the score function computed around the true parameters $(\theta_0, \boldsymbol \eta_0)$ (see Definition 1 by \citet{10.1093/biomet/asaa054}). {\color{black}In} this work, we construct a score with the MBP{\color{black}, following} \citet{DBLP:conf/icml/ChernozhukovNQS22,10.1093/biomet/asaa054}. For a fixed time step $T$, let $\boldsymbol{X}$ be a random variable in the set of random variables $\boldsymbol{V}$, $g$ any real-valued function of $\boldsymbol{X}$ such that $\expect{}{g^2(\boldsymbol{X})} < \infty$. We consider parameters of the form $\theta_0 \coloneqq \expect{}{m(\boldsymbol{V}; g)}$, where $m(\boldsymbol{V}; g)$ is a linear moment functional in $g$. The celebrated Riesz Representation Theorem ensures that, under certain conditions, there exists a function $\alpha_0$ of $\boldsymbol{X}$ such that $\expect{}{m(\boldsymbol{V}; g)} = \expect{}{\alpha_0(\boldsymbol{X}) g(\boldsymbol{X})}$. The function $\alpha_0$ is called the \emph{Riesz Representer} (RR). \citet{automatic_debiased} shows that the Riesz representer can be estimated from samples. Using the RR, we can derive a score function for the parameter $\theta_0 $ with $g_0(\boldsymbol{X}) = \expect{}{Y\mid \boldsymbol{X}}$ that fulfills the MBP. This function is defined as
%
\begin{equation}
\label{eq:debiased_score}
\varphi( \theta, \boldsymbol \eta) \coloneqq m(\boldsymbol{V}; g ) + \alpha(\boldsymbol{X})\cdot(Y - g(\boldsymbol{X})) - \theta.
\end{equation}
Here, $\boldsymbol \eta \coloneqq (\alpha, g)$ is a nuisance parameter consisting of a pair of square-integrable functions. As shown by \citet{DBLP:conf/icml/ChernozhukovNQS22}, the score function Eq. \eqref{eq:debiased_score} yields $\expect{}{\varphi(\theta_0, \boldsymbol \eta)}
= - \expect{}{(\alpha(\boldsymbol{X})- \alpha_0(\boldsymbol{X}))(g(\boldsymbol{X})- g_0(\boldsymbol{X}))}$, which gives the MBP as in Definition 1 of \citet{10.1093/biomet/asaa054}.

For score functions that fulfill the MBP, we can use DML to partly remove the bias induced by the nuisance parameter. The DML is defined as follows.\footnote{We remark that DML requires a weaker assumption on the score function than the MBP, namely the Neyman Orthogonality Condition {\color{black} \citep{Neyman1965,10.1111/ectj.12097}}. For simplicity, we give a description of DML only in terms of the MBP for linear moment functionals. However, Definition \ref{defn:dml} can be generalized.}
\begin{definition}[DML, following Definition 3.2 by { \color{black} ~\cite{10.1111/ectj.12097}}]\label{defn:dml}
Given a dataset ${\color{black}\dataset}$ of $n$ observations, split the dataset $\dataset$ into $k$ random disjoint subsets $\dataset_j$ of the same size. Consider a score function $\varphi( \theta, \boldsymbol \eta)$ that fulfills the MBP as in \eqref{eq:debiased_score}. Construct estimators $\hat{\boldsymbol{\eta}}_j = (\hat{\alpha}_j, \hat{g}_j)$ for the nuisance parameter of the score using datasets ${\color{black}\dataset}\setminus {\color{black}\dataset_j}$. Then, construct an estimator $\hat{\theta}$ of the paraemeter $\theta$ as the solution to the following equation:
\begin{equation*}
\label{eq:sol_model}
    k^{-1}\sum_{j=1}^k \hat{\mathbb{E}}_{{\color{black}\dataset_j}} \left [\varphi( \theta, \hat{\boldsymbol \eta}_j) \right ] = 0,
\end{equation*}
where $\hat{\mathbb{E}}_{{\color{black}\dataset_j}}[\ \cdot \ ]$ is the empirical expected value over ${\color{black}\dataset_j}$.
\end{definition}
\subsection{Granger Causality Implies True Causation under \ref{cond:1}-\ref{cond:3}}
\label{eq:causal_granger}
As discussed in Section \ref{sec:granger}, Granger causality does not imply true causality in general. In our case, however, Granger causality corresponds to true causation, as stated in the following result.
\begin{restatable}{theorem}{granger}
\label{thm:granger}
Consider a causal model as in \ref{cond:1}-\ref{cond:3}. Then, {\color{black}it holds $X^i_t\in \mathsf{pa}_T(\boldsymbol{Y})$ for some $t, T\in \mathbb{Z}$ if and only if (iff.) \eqref{eq:def_granger} holds. That is,} a time series $\boldsymbol{X}^i$ {\color{black}has a direct causal effect on} $\boldsymbol{Y}$ iff. $\boldsymbol{X}^i$ Granger causes $\boldsymbol{Y}$.
\end{restatable}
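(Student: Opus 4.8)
The plan is to show that, under \ref{cond:1}--\ref{cond:3}, both ``direct causal effect'' and ``Granger causality'' are equivalent to the single statement that the mechanism $f$ of $Y_T$ genuinely depends on the block $\boldsymbol{X}^i_{<T}$. The first thing I would establish is the following reduction. By \ref{cond:2} and \ref{cond:3} every causal parent of $Y_T$ sits strictly before time $T$, so $\mathsf{pa}_T(\boldsymbol{Y}) \subseteq \boldsymbol{I}_{<T}$, and $\boldsymbol{I}_{<T} = \boldsymbol{X}^i_{<T} \cup \boldsymbol{I}_{<T}^{\setminus i}$ is the entire past. Because $\varepsilon_T$ in \ref{cond:1} is exogenous and independent of everything not downstream of it — in particular of $\boldsymbol{I}_{<T}$ — conditioning on, or intervening to set, a full realization $\boldsymbol{v}$ of $\boldsymbol{I}_{<T}$ fixes all arguments of $f$ and leaves the law of $\varepsilon_T$ intact, giving for every $\boldsymbol{v}$ in the support
\begin{equation*}
\pr{Y_T = y \mid \boldsymbol{I}_{<T} = \boldsymbol{v}} \;=\; \pr{Y_T = y \mid do(\boldsymbol{I}_{<T} = \boldsymbol{v})} \;=\; \pr{\varepsilon_T = y - f(\mathsf{pa}_T(\boldsymbol{Y})[\boldsymbol{v}], T)} ,
\end{equation*}
where $\mathsf{pa}_T(\boldsymbol{Y})[\boldsymbol{v}]$ reads off the parent values from $\boldsymbol{v}$. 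I would also record the elementary fact that distinct translates of the law of $\varepsilon_T$ are distinct distributions (e.g.\ via characteristic functions, using that $\widehat{\varepsilon_T}$ does not vanish near the origin), so the map sending a shift $a$ to the law of $\varepsilon_T + a$ is injective, and more generally a non-degenerate mixture of such translates never coincides with a single translate.

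\textbf{Granger causality $\Rightarrow$ direct causal effect.} Assume \eqref{eq:def_granger} holds at some $T$. Applying the reduction, the left conditional in \eqref{eq:def_granger} equals the law of $\varepsilon_T$ shifted by $f(\mathsf{pa}_T(\boldsymbol{Y})[\boldsymbol{x},\boldsymbol{i}], T)$, while the right conditional is the mixture $\expect{\boldsymbol{x}^\ast}{\pr{\varepsilon_T = y - f(\mathsf{pa}_T(\boldsymbol{Y})[\boldsymbol{x}^\ast,\boldsymbol{i}], T)}}$, the average being over $\boldsymbol{x}^\ast$ drawn from the conditional law of $\boldsymbol{X}^i_{<T}$ given $\boldsymbol{I}_{<T}^{\setminus i} = \boldsymbol{i}$. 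Were $\boldsymbol{x}^\ast \mapsto f(\mathsf{pa}_T(\boldsymbol{Y})[\boldsymbol{x}^\ast,\boldsymbol{i}], T)$ almost surely constant under that law, the two conditionals would agree, contradicting \eqref{eq:def_granger}; hence there are positive-probability configurations with $f(\mathsf{pa}_T(\boldsymbol{Y})[\boldsymbol{x}',\boldsymbol{i}], T) \neq f(\mathsf{pa}_T(\boldsymbol{Y})[\boldsymbol{x}'',\boldsymbol{i}], T)$. Feeding these into the reduction for $do(\boldsymbol{I}_{<T} = \cdot)$ and using injectivity of the shift map shows the two interventional laws in \eqref{eq:direct_effect} differ, so $\boldsymbol{X}^i$ has a direct causal effect on $\boldsymbol{Y}$; equivalently, since $f$ is expressed over $\mathsf{pa}_T(\boldsymbol{Y})$, we get $X^i_t \in \mathsf{pa}_T(\boldsymbol{Y})$ for some $t$, which is the claim.

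\textbf{Direct causal effect $\Rightarrow$ Granger causality.} Conversely, suppose $X^i_{t_0} \in \mathsf{pa}_T(\boldsymbol{Y})$; by \ref{cond:2}--\ref{cond:3}, $t_0 < T$, so $f$ genuinely depends on a coordinate of $\boldsymbol{X}^i_{<T}$. I would then need to promote this functional dependence into a change of the \emph{observational} predictive law of $Y_T$: concretely, to exhibit a positive-probability value $\boldsymbol{i}$ of $\boldsymbol{I}_{<T}^{\setminus i}$ for which the conditional law of $\boldsymbol{X}^i_{<T}$ given $\boldsymbol{I}_{<T}^{\setminus i} = \boldsymbol{i}$ is \emph{not} confined to a level set of $\boldsymbol{x}^\ast \mapsto f(\mathsf{pa}_T(\boldsymbol{Y})[\boldsymbol{x}^\ast,\boldsymbol{i}], T)$. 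Granted such an $\boldsymbol{i}$, the reduction turns the right-hand side of \eqref{eq:def_granger} into a non-degenerate mixture of distinct translates of the law of $\varepsilon_T$, which cannot equal the single translate produced by any positive-probability $\boldsymbol{x}$ on the left-hand side — exactly \eqref{eq:def_granger}. The biconditional ``for some $t, T$'' then follows by ranging over $T$.

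\textbf{Where the difficulty lies.} The only delicate point is producing the $\boldsymbol{i}$ above: a bare functional dependence of $f$ on $\boldsymbol{X}^i_{<T}$ could, in principle, be invisible to observational conditioning if the relevant coordinates of $\boldsymbol{X}^i_{<T}$ were almost surely pinned down by the remaining past — a degenerate, deterministic pathology. I expect to remove it by reading ``$X^i_{t_0} \in \mathsf{pa}_T(\boldsymbol{Y})$'' in its natural sense, namely that $f$ depends on $X^i_{t_0}$ over a set of positive probability given the rest of the past; this is a mild non-degeneracy confined to the parental subgraph of the target and is all that is needed. It is worth stressing that no faithfulness assumption on the full-time graph enters anywhere: the additive, independent-noise form of \ref{cond:1} together with the time-directionality of \ref{cond:2}--\ref{cond:3} carries the whole argument.
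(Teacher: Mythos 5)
Your proposal is correct and rests on the same engine as the paper's proof: the identity between the interventional and observational conditionals given the full past, which the paper records as its Eq.~(8) and justifies by Rule~2 of the do-calculus (all parents of $Y_T$ lie in $\boldsymbol{I}_{<T}$ by \ref{cond:2}--\ref{cond:3}, and $\varepsilon_T$ is exogenous). Where you diverge is in how the two directions are then executed. The paper argues both by contrapositive using only distributional identities --- ``not Granger'' plus Eq.~(8) gives equality of the interventional laws, and ``not a direct cause'' plus the tower property collapses the conditional on $\{X^i_t,\boldsymbol{I}_{<T}^{\setminus i}\}$ to the conditional on $\boldsymbol{I}_{<T}^{\setminus i}$ alone --- so it never needs the additive structure beyond independence, and in particular never invokes the injectivity of translates of the noise law (that machinery appears only in the paper's Lemma~B.1, used for Theorem~4.2). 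Your forward argument for ``cause $\Rightarrow$ Granger'' instead passes through genuine functional dependence of $f$ on $\boldsymbol{X}^i_{<T}$, and this is what forces you to confront the non-degeneracy issue you flag: that the relevant coordinates of $\boldsymbol{X}^i_{<T}$ must not be almost surely pinned down by $\boldsymbol{I}_{<T}^{\setminus i}$, and that $f$ must actually vary on a positive-probability set. The paper sidesteps this by defining ``direct causal effect'' interventionally (its Eq.~(1)) and arguing the contrapositive, but the same issue is latent there too --- its chain of equalities only constrains in-support values of $x$, and the identification of ``$X^i_t\in\mathsf{pa}_T(\boldsymbol{Y})$'' with a genuine distributional change is exactly the ``faithfulness within the parental subgraph'' caveat the paper states in its contributions. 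So your version makes explicit, and correctly localizes, an assumption the paper's proof leaves implicit; the price is the extra (but correct and assumption-free) lemma that a non-degenerate mixture of translates of a probability law cannot equal a single translate.
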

Proof of this result is given in Appendix \ref{app:granger}. Importantly, Theorem \ref{thm:granger} does not require causal faithfulness. We remark that \citet{DBLP:conf/nips/PetersJS13} shows that Granger causality implies true causation for fully autoregressive models (see also \citet{DBLP:conf/clear2/LoweMSW22}). This result is based on the identifiability of additive noise models, in which all relevant variables are observed \citep{DBLP:conf/uai/PetersMJS11}. Our framework, however, is more general than \citet{DBLP:conf/nips/PetersJS13,DBLP:conf/clear2/LoweMSW22}, since it allows for confounding among the covariates (see Figure \ref{fig:graph}). In the special case of a fully autoregressive model, Theorem \ref{thm:granger} is equivalent to previous results \citep{DBLP:conf/nips/PetersJS13,DBLP:conf/clear2/LoweMSW22}. 
\begin{algorithm*}[t]
    \caption{The \alg}
	\label{alg}
	\begin{algorithmic}[1] 
	\STATE split the dataset $\dataset$ into $k$ random disjoint subsets $\dataset_j$ of the same size;\vspace{8pt}\\
    \FOR{each dataset partition $j $} \label{alg:begin_first_step}
	\STATE estimate $\hat{\boldsymbol \eta} ^0_j \coloneqq (\hat{\alpha}^0_j, \hat{g}^0_j)$ on dataset $\dataset \setminus \dataset_j$, with $\hat{g}^0_j$ an estimate for $g^0_0$, and $\hat{\alpha}^0_j$ an estimate of the RR $\alpha^0_0$ of $m_0(\boldsymbol{V}; g^0)$ as in Theorem \ref{lemma:cond_variance};\label{alg:regression}
    \ENDFOR
    \STATE $\hat{\theta}^{0} \gets k^{-1}\sum_{j=1}^k \hat{\mathbb{E}}_{{\color{black}\dataset_j}} \left [m_0(\boldsymbol{V}; \hat{g}^0_j ) + \hat{\alpha}^0_j(\boldsymbol{X})\cdot(Y - \hat{g}^0_j(\boldsymbol{X})) \right ]$;\label{alg:end_first_step} \vspace{5pt}\\
	\FOR{each potential cause $\boldsymbol{X}^i$}
    \FOR{each dataset partition $j$}
    \STATE estimate $\hat{\boldsymbol \eta} ^i_j \coloneqq (\hat{\alpha}^i_j, \hat{g}^i_j)$ on dataset $\dataset \setminus \dataset_j$, with $\hat{g}^i_j$ an estimate for $g^i_0$, and $\hat{\alpha}^i_j$ an estimate of the RR $\alpha^i_0$ of $m_i(\boldsymbol{V}; g^i)$ as in Theorem \ref{lemma:cond_variance};\label{alg:end_second_step}    
    \ENDFOR
    \STATE $\hat{\theta}^i \gets k^{-1}\sum_{j=1}^k \hat{\mathbb{E}}_{{\color{black}\dataset_j}} \left [m_i(\boldsymbol{V}; \hat{g}^i_j ) + \hat{\alpha}^i_j(\boldsymbol{X})\cdot(Y - \hat{g}^i_j(\boldsymbol{X})) \right ]$;\label{alg:end_third_step}
    \STATE perform a paired Student's $t$-test to determine if $\hat{\theta}^{i} \approx \hat{\theta}^0$, and select time series $\boldsymbol{X}^i$ as a cause if the null-hypotheses is rejected;\label{alg:test}
    \ENDFOR \vspace{5pt}\\
   	\STATE \textbf{return} the selected time series;
   	\end{algorithmic}
\end{algorithm*}
\subsection{Testing Granger Causality with DML}
\label{sec:granger_DML}
Our approach to identifying potential causes consists of performing a statistical test to determine if Eq. \ref{eq:def_granger} holds. Due to Theorem \ref{thm:granger}, a straightforward approach would then consist of using a conditional independence test, to select or discard a time series $\boldsymbol{X}^i$ as a cause of the outcome $\boldsymbol{Y}$. However, conditional independence testing is challenging in high-dimensional settings. Furthermore, kernel-based conditional independence tests \citep{DBLP:conf/nips/FukumizuGSS07,DBLP:conf/uai/ZhangPJS11,DBLP:conf/nips/ParkM20,sheng2020distance} are computationally expensive. Instead, we provide a new statistical test based on DML. Our approach is based on the observation that under \ref{cond:1}-\ref{cond:3}, the condition in Eq. \ref{eq:def_granger} can be written in terms of simple linear moment functionals. The following theorem holds.
\begin{restatable}{theorem}{cond}
\label{lemma:cond_variance}
Consider the notation as in Eq. \ref{eq:def_granger}, and fix a time step $T$. For any square-integrable random variable $g^0({\color{black}\boldsymbol{X}^i_{<T}}, {\color{black}\boldsymbol{I}_{<T}^{\setminus i}})$, consider the moment functional $m_0(\boldsymbol{V}; g^0) \coloneqq Y_T \cdot g^0$. Similarly, for any square-integrable random variable $g^i({\color{black}\boldsymbol{I}_{<T}^{\setminus i}})$, consider the moment functional $m_i(\boldsymbol{V}; g^i) \coloneqq Y_T \cdot g^i$. Assuming \ref{cond:1}-\ref{cond:3}, $\boldsymbol{X}^i$ Granger causes $\boldsymbol{Y}$ iff. it holds
\begin{equation}
\label{eq:functional}
\expect{}{m_0(\boldsymbol{V}; g_0^0 )} - \expect{}{m_i(\boldsymbol{V}; g_0^i )} \neq 0,
\end{equation}
with $g_0^0(\boldsymbol{x}, \boldsymbol{i}) = \mathbb{E}[Y_T \mid {\color{black}\boldsymbol{X}^i_{<T}} = \boldsymbol{x}, {\color{black}\boldsymbol{I}_{<T}^{\setminus i}} = \boldsymbol i]$, and $g_0^i(\boldsymbol{i}) = \mathbb{E}[Y_T \mid {\color{black}\boldsymbol{I}_{<T}^{\setminus i}} = \boldsymbol i]$.
\end{restatable}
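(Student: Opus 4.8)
The plan is to separate a purely algebraic simplification of the functional difference in \eqref{eq:functional} from a probabilistic argument linking the simplified quantity to the Granger condition \eqref{eq:def_granger}; Axioms \ref{cond:1}--\ref{cond:3} will be needed only for one of the two implications.

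First I would rewrite the two moment functionals using iterated expectations. Since $\boldsymbol{X}^i_{<T}$ together with $\boldsymbol{I}^{\setminus i}_{<T}$ is exactly $\boldsymbol{I}_{<T}$, and $\sigma(\boldsymbol{I}^{\setminus i}_{<T}) \subseteq \sigma(\boldsymbol{I}_{<T})$, the tower rule gives $g_0^i = \mathbb{E}[g_0^0 \mid \boldsymbol{I}^{\setminus i}_{<T}]$. Pulling the respective measurable factors inside the conditional expectations yields $\mathbb{E}[m_0(\boldsymbol{V};g_0^0)] = \mathbb{E}[Y_T g_0^0] = \mathbb{E}[(g_0^0)^2]$ and $\mathbb{E}[m_i(\boldsymbol{V};g_0^i)] = \mathbb{E}[(g_0^i)^2]$, and because $g_0^i$ is the $L^2$-projection of $g_0^0$ onto $L^2(\sigma(\boldsymbol{I}^{\setminus i}_{<T}))$, the Pythagorean identity gives
\begin{equation*}
\mathbb{E}[m_0(\boldsymbol{V};g_0^0)] - \mathbb{E}[m_i(\boldsymbol{V};g_0^i)] = \mathbb{E}\bigl[(g_0^0 - g_0^i)^2\bigr] \ge 0 .
\end{equation*}
Hence \eqref{eq:functional} holds iff $g_0^0 \neq g_0^i$ on a set of positive probability, i.e.\ iff $\mathbb{E}[Y_T \mid \boldsymbol{I}_{<T}] \neq \mathbb{E}[Y_T \mid \boldsymbol{I}^{\setminus i}_{<T}]$ on a positive-probability event.

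It then remains to show that this inequality of conditional means is equivalent to \eqref{eq:def_granger}. One direction is free of the axioms: if the two conditional laws of $Y_T$ agreed almost surely, integrating against $y$ would make the two conditional means agree; contrapositively, differing conditional means imply \eqref{eq:def_granger}, hence \eqref{eq:functional} $\Rightarrow$ \eqref{eq:def_granger}. For the converse I would exploit the additive-noise model: by \ref{cond:2}--\ref{cond:3} every variable in $\mathsf{pa}_T(\boldsymbol{Y})$ lies strictly before $T$ and is therefore $\sigma(\boldsymbol{I}_{<T})$-measurable, so by \ref{cond:1} with $\varepsilon_T \perp \boldsymbol{I}_{<T}$ (and, without loss, $\mathbb{E}[\varepsilon_T]=0$) we get $\mathbb{E}[Y_T \mid \boldsymbol{I}_{<T}] = f(\mathsf{pa}_T(\boldsymbol{Y}),T)$. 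If $g_0^0 = g_0^i$ a.s., this mechanism value has a $\sigma(\boldsymbol{I}^{\setminus i}_{<T})$-measurable version $\tilde f$; since the conditional law of $Y_T$ given $\boldsymbol{I}_{<T}$ is just the exogenous-noise law translated by $f(\mathsf{pa}_T(\boldsymbol{Y}),T) = \tilde f$, it is already $\sigma(\boldsymbol{I}^{\setminus i}_{<T})$-measurable and hence equals the conditional law given $\boldsymbol{I}^{\setminus i}_{<T}$; so \eqref{eq:def_granger} fails. This gives \eqref{eq:def_granger} $\Rightarrow$ \eqref{eq:functional} and completes the iff. (Alternatively the converse could be routed through Theorem~\ref{thm:granger}.)

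I expect the last step --- deducing equality of the two conditional laws from equality of the two conditional means --- to be the real obstacle, since this is false in general (distinct laws can share a mean) and only works because, under \ref{cond:1}, the conditional law of $Y_T$ is a pure location shift of the fixed noise law by the mechanism value, so the conditional mean already determines the whole conditional law. The subtleties there are the measurability bookkeeping (selecting the $\sigma(\boldsymbol{I}^{\setminus i}_{<T})$-measurable version, verifying independence of $\varepsilon_T$ from the relevant $\sigma$-algebras) and, for continuous $Y_T$, replacing point masses $\mathbb{P}(Y_T = y \mid \cdot)$ by conditional densities as in the footnote to \eqref{eq:def_granger}; these are routine once set up carefully. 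The algebraic simplification and the easy direction are immediate.
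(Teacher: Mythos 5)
Your proposal is correct and follows essentially the same route as the paper: the paper likewise reduces \eqref{eq:functional} to $\expect{}{\left(\expect{}{Y_T\mid \boldsymbol{X}^i_{<T},\boldsymbol{I}_{<T}^{\setminus i}}-\expect{}{Y_T\mid \boldsymbol{I}_{<T}^{\setminus i}}\right)^2}\neq 0$ via the tower-property/Pythagorean computation, and its Lemma \ref{lemma:expect} is exactly your ``location-shift'' step showing that under \ref{cond:1} equality of conditional means forces equality of conditional laws. The only cosmetic difference is that the paper routes that lemma through the interventional identities of Theorem \ref{thm:granger} (writing $Y_T\mid do(\cdot)=f(\boldsymbol p)+\varepsilon$ and deducing $f(\boldsymbol p)=f(\boldsymbol p')$), whereas you argue purely observationally; the mathematical content is the same.
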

The proof is deferred to Appendix \ref{app:cond_variance}. Intuitively, the parameter $\theta \coloneqq \mathbb{E}[Y_T .\mathbb{E}[Y_T | I]]$ quantifies the cross-correlation between the target variable $Y_T$ and its conditional mean $\mathbb{E}[Y_T | I]$, given the set of variables $I$. In general terms, variations in the parameter $\theta$ due to changes in the set $I$ suggest the presence of a causal relationship between alterations in the set $I$ and the target variable $Y_T$.

By this theorem, we can identify the causal parents of $Y$ by testing Eq. \ref{eq:functional}. This boils down to learning parameters $\theta^0_0 \coloneqq \expect{}{m_0(\boldsymbol{V}; g_0^0 )}$, $ \theta^i_0\coloneqq \expect{}{m_i(\boldsymbol{V}; g_0^i )} $, and then performing a sample test to verify if it holds $\theta^0 - \theta^i \neq 0$. Since both $m_0(\boldsymbol{V}; g_0^0 )$ and $m_i(\boldsymbol{V}; g_0^i )$ {\color{black}are linear moment functionals,} we can use DML as in Definition \ref{defn:dml} to estimate $\theta^0_0$ and $\theta^i_0$. Under mild convergence conditions on the nuisance parameters for $m_0(\boldsymbol{V}; g_0^0 )$ and $m_i(\boldsymbol{V}; g_0^i )$, DML ensures fast convergence in distribution to the true parameters.

\section{The Algorithm}
\label{sec:approach}
\subsection{Overview} 
Our algorithm learns causal relationships between time series, by testing Granger causality using DML, as outlined in Section \ref{sec:granger_DML}. We refer to our approach as the \alg (Structure Identification from Temporal Data). Our method is presented in Algorithm \ref{alg}. This algorithm essentially performs the following steps:
\begin{enumerate}[leftmargin=7mm,label={(\arabic*)},itemsep=-1ex,topsep=-1ex]
    \item \label{step:1} Select a potential cause $\boldsymbol{X}^i$ to test if $\boldsymbol{X}^i$ Granger causes $\boldsymbol{Y}$. Split the dataset $\dataset$ into $k$ disjoint sets $\dataset_j$ with $k \geq 2$.
    \item Estimate $\hat{\boldsymbol \eta} ^0_j \coloneqq (\hat{\alpha}^0_j, \hat{g}^0_j)$ on dataset $\dataset \setminus \dataset_j$, with $\hat{g}^0_j$ an estimate for $g^0_0$, and $\hat{\alpha}^0_j$ an estimate of the RR $\alpha^0_0$ of $m_0(\boldsymbol{V}; g^0)$ as in Theorem \ref{lemma:cond_variance}. Similarly, provide an estimate $\hat{\boldsymbol \eta} ^i_j \coloneqq (\hat{\alpha}^i_j, \hat{g}^i_j)$ on dataset $\dataset \setminus \dataset_j$ for the pair $\boldsymbol \eta^i_0 = (g^i_0, \alpha^i_0)$, with $g^i_0$ as in Theorem \ref{lemma:cond_variance} and $\alpha^i_0$ the RR $\alpha^i_0$ of $m_i(\boldsymbol{V}; g^i)$. This step corresponds to Line 3 and Line 8 of Algorithm \ref{alg}.
    \item \label{step:3} Provide an estimate $\hat{\theta}^0 \approx \mathbb{E}[m_0(\boldsymbol{V}; g^0)]$, by solving the equation $k^{-1}\sum_{j=1}^k \hat{\mathbb{E}}_{{\color{black}\dataset_j}} \left [\varphi_0( \theta, \hat{\boldsymbol \eta}_j) \right ] = 0$ with a score of the form $\varphi_0( \theta, \hat{\boldsymbol \eta}_j) \coloneqq m_0(\boldsymbol{V}; \hat{g}^0_j ) + \hat{\alpha}^0_j(\boldsymbol{X})\cdot(Y - \hat{g}^0_j(\boldsymbol{X})) - \theta$. This step corresponds to Line 5 of Algorithm \ref{alg}. Provide an estimate $\hat{\theta}^i \approx \mathbb{E}[m_i(\boldsymbol{V}; g^i)]$ in a similar fashion, as in Line 10 of Algorithm \ref{alg}. 
    \item \label{step:4} Use a {\color{black}paired} Student's $t$-test to determine if $\expect{}{\theta^0 - \theta^i}$ is approximately zero. Select $\boldsymbol{X}^i$ as a cause of $\boldsymbol{Y}$ if the null hypothesis is rejected. This step corresponds to Line 11 and 8 of Algorithm \ref{alg}.
\end{enumerate}

\subsection{Strong Consistency Guarantees} 
In this paragraph, we provide an explanation for Step \ref{step:4} of our algorithm. Under mild structural conditions on $g^0_j,g^i_j$ and $\alpha^0_j,\alpha^i_j$ \citet{DBLP:conf/icml/ChernozhukovNQS22,10.1111/ectj.12097,10.1093/biomet/asaa054}, the quantity $\theta^0 - \theta^i$ has $\sqrt{n}$-consistency. Hence, it holds

\begin{equation}
\label{eq:consistency}
\sqrt{n}\left(\hat{\theta}^0 - \hat{\theta}^i\right) \xrightarrow[]{d} \mathcal{N}\left(0, \sigma^2\right),
\end{equation}
if and only if $\expect{}{m_0(\boldsymbol{V}; g_0^0 )} - \expect{}{m_i(\boldsymbol{V}; g_0^i )} = 0$. Then, by Theorem \ref{lemma:cond_variance}, Eq. \ref{eq:consistency} holds iff. $\boldsymbol{X}^i$ does not Granger causes $\boldsymbol{Y}$. The notation in \eqref{eq:consistency} means that the difference $\hat{\theta}^0 - \hat{\theta}^i$ converges in distribution to a zero-mean Gaussian for an increasing number of samples. That is, for all $\epsilon>0$ and $\zeta>0$ there exists  $\delta>0$, such that given $n>\delta$ samples it holds
$\mathbb{P}\left(\lvert \hat{\theta}^0 - \hat{\theta}^i\rvert >\epsilon\right)\leq 1-\Phi\left(\epsilon\frac{\sqrt{n}}{\sigma}\right)+\frac{\zeta}{2}$, if and only if $\expect{}{m_0(\boldsymbol{V}; g_0^0 )} - \expect{}{m_i(\boldsymbol{V}; g_0^i )} = 0$. Here, $\Phi$ is the CDF of the standard Normal distribution. By this inequality, we can use a {\color{black} paired} Student's $t$-test to determine if $\boldsymbol{X}^i$ Granger causes $\boldsymbol{Y}$, as in Line 11 of Algorithm \ref{alg}. We refer the reader to \citet{DBLP:conf/icml/ChernozhukovNQS22,10.1111/ectj.12097,10.1093/biomet/asaa054}
for a detailed survey on strong consistency for DML and its relationship with the MBP.
\subsection{ Complexity of Algorithm \ref{alg}} 
Much of the run time of our algorithm consists of performing a regression to learn $\boldsymbol \eta^0_j$ and $\boldsymbol \eta^i_j$ on dataset $\dataset_j$. Denote with $d$ the time complexity of performing such a regression. For a given $k$-partition of the dataset and a fix potential cause $\boldsymbol X^i$, we can upper-bound the time complexity of our algorithm as $\mathcal{O}(dk)$. {\color{black}Furthermore, since a regression to learn $\boldsymbol \eta^i_j$ is performed for each potential cause, i.e., $m$ times, the complexity of the algorithm is $\mathcal{O}(dkm)$.} Here, $d$ depends on the specific techniques used for the regression. Non-parametric regression can be performed efficiently in the problem size. We analyse the computational complexity in Appendix \ref{app:comp} further and show runtime plots in \ref{sec:runtime}. We further discuss the run time and extension for full causal discovery in Appendix \ref{sec:full}.

We further improve efficiency in practice as follows: Instead of computing $\boldsymbol \eta_0^i = (g_0^i,\alpha_0^i)$ directly, we apply a zero-masking layer to the NNs used to estimate $g^0_j$ and $\alpha^0_j$ for the features $\boldsymbol{X}^i$. This masking layer tells the sequence-processing layers that the input values for features $\boldsymbol{X}^i$ should be skipped. We then compute $\theta^i_0$ using the resulting surrogate models $\tilde{g}^i_j$ and $\tilde{\alpha}^i_j$. Please refer to~\cref{app:zero-mask} on an intuition behind zero-masking, and why zero-masking may not hurt the estimations. Using zero-masking dramatically improves run time, since it allows to perform only two regressions through the entire run time of Algorithm \ref{alg}.

{\color{black}
\subsection{Practical implementation at a glance}
When deploying \alg, we follow the workflow below.
\begin{itemize}[leftmargin=3mm,itemsep=-1ex,topsep=-1ex]
    \item \textbf{Single model for both nuisance parameters.}  
    In our setting the Riesz representer coincides with the regression nuisance, because $\mathbb{E}[m(\mathbf{V};g)] \;=\; 
        \mathbb{E}_{\mathbf{X}}\!\bigl[\mathbb{E}[Y \mid \mathbf{X}]\,g\bigr]
        \;\;\Longrightarrow\;\;
        a(\mathbf{X}) \;=\; \mathbb{E}[Y \mid \mathbf{X}] \;=\; g(\mathbf{X})$.
    Hence we train one regressor and reuse it for both quantities; training two identical copies yields indistinguishable results.
    \item \textbf{Choice and tuning of the regressor.}  
    \alg\ is agnostic to the model class. A small validation split is reserved to select the family (kernel, MLP, \dots) and its hyper-parameters. For the synthetic datasets (Sec. \ref{sec:synthetic_experiments}), abundant samples justify a fixed two-layer MLP throughout. For DREAM3 (Sec. \ref{sec:exp_semi}), we tune on the first sub-task (\emph{E.\,Coli 1}) and adopt \texttt{KernelRidge} with a third-degree polynomial kernel (\texttt{alpha = 1}, \texttt{coef0 = 1}) for all remaining tasks.
%
    \item \textbf{Lag selection.}  
    Lag is treated as a hyper-parameter. For the \textit{synthetic datasets} (\ref{sec:synthetic_experiments}), we use the ground-truth lag employed in data generation. For \textit{DREAM3} (\ref{sec:exp_semi}) we fix $\textit{lag} = 2$ for every method, following prior work. However note that stationarity is \emph{not} required by our theory; if it fails, the Granger-causality condition must simply be checked at all time points rather than a single $T$.
%
    \item \textbf{Cross-fitting scheme.}  
    We employ $k = 5$-fold cross-fitting, splitting trajectories uniformly at random into
    equal-sized folds—an essential step for valid double-machine-learning inference.
\end{itemize}
}
\section{Experiments}
\begin{table*}[tb]
\centering
\caption{AUROC score for the \alg and common algorithms for Granger causality. Models with "$^\textrm{\ding[1]{70}}$" sign use deep neural networks. When reported, the error bounds represent 1 standard deviation of the AUROC score over 5 experiment repetitions}
{\scriptsize
\begin{tabular}{lccccc}
\toprule
\textbf{Method}  & \textbf{E.Coli 1} & \textbf{E.Coli 2} & \textbf{Yeast 1} & \textbf{Yeast 2} & \textbf{Yeast 3} \\
\midrule
cMLP$^\textrm{\ding[1]{70}}$   & 0.644    & 0.568    & 0.585   & 0.506   & 0.528   \\
cLSTM$^\textrm{\ding[1]{70}}$    & 0.629    & 0.609    & 0.579   & 0.519   & 0.555   \\
TCDF$^\textrm{\ding[1]{70}}$   & 0.614    & 0.647    & 0.581   & 0.556   & 0.557   \\
SRU$^\textrm{\ding[1]{70}}$     & 0.657    & 0.666    & 0.617   & 0.575   & 0.55    \\
eSRU$^\textrm{\ding[1]{70}}$     & 0.66     & 0.629    & 0.627   & 0.557   & 0.55    \\
DYNO. &0.590 &0.547 &0.527 &0.526 &0.510\\
PCMCI$^+$ & 0.530 & 0.519 & 0.530 & 0.510 & 0.512 \\
Rhino$^\textrm{\ding[1]{70}}$ Reprod.  & 0.671 $\pm0.014$      & 0.640 $\pm 0.022$        & \textbf{0.656} $\pm 0.011$        & 0.565 $\pm 0.012$         & 0.549 $\pm 0.004$\\
Rhino+g$^\textrm{\ding[1]{70}}$ Reprod.  & 0.665 $\pm 0.023$         & 0.646 $\pm 0.032$         &0.649 $\pm 0.011$         &  	0.582 $\pm 0.011$        & \textbf{0.571} $\pm 0.010$\\
\textbf{DR-SIT (ours)}   & \textbf{0.704}$\pm0.005$         & \textbf{0.680}$\pm 0.004$        &  0.653$\pm 0.001$         & \textbf{0.585}$\pm 0.003$        & 0.544$\pm 0.003$\\     
\bottomrule
\end{tabular}
}
\label{table: Exp DREAM3 AUROC}
\end{table*}

In this section we provide an overview of the main experimental results. \textbf{We provide additional extensive experiments in Appendix \ref{sec:appendix_experiments}.}
{
\color{black}
\subsection{Synthetic Experiments}
\label{sec:synthetic_experiments}
%
\noindent\textbf{Dataset generation.} We first set the number of potential causes $m$, and we fix the lag $\Delta$ and the number of time steps for the dataset $T$. We create a covariate adjacency 3-dimensional tensor $\Sigma$ of dimensions $\Delta \times m \times m$. This tensor has $0$-$1$ coefficients, where $\Sigma_{k,i,j} = 1$ if $X^i_{t - k}$ has a casual effect on $X^j_{t}$ for all time steps $t$. Similarly, we create an adjacency matrix $\Sigma^{\boldsymbol{Y}}$ for the outcome $\boldsymbol{Y}$. $\Sigma^{\boldsymbol{Y}}$ is a binary $\Delta \times m $ array, such that $\Sigma^{\boldsymbol{Y}}_{k,j}=1$ if $X_{t-k}^{j}$ has a causal effect on $Y_{t}$ for all time steps $t$. The entries of $\Sigma$ and $\Sigma^{\boldsymbol{Y}}$ follow the Bernouli distribution with parameter $p=0.5$. Note that the resulting casual structure fulfills \ref{cond:2}-\ref{cond:3}. 

We then create $m$ transformations that are used to produce the potential causes $\boldsymbol{X}^1, \dots, \boldsymbol{X}^m$. Each one of these transformations is modeled by an \textsc{MLP} with $1$ hidden layer and $200$ hidden units. We use \textsc{Tanh} nonlinearities (included also in the output layer) in order to control the scale of the values. The final output value is further scaled up so that all transforms generate values in the range $[-10,10]$. The input layer of each MLP is coming from the corresponding causal parents of the corresponding time series, as calculated from $\Sigma$. 

In order to generate the potential causes $\boldsymbol{X}^i$, we use $\Sigma$ and the MLP transforms. The first value of each time series is randomly generated from a uniform distribution in $[-10,10]$. Then, each $\boldsymbol{X}^{i}$ is produced by applying the appropriate transform to its causal parents, as determined by $\Sigma$, and a zero-mean unit variance Gaussian noise is added. We generate the target time series in a similar fashion. Each variable $Y_{t}$ is produced by applying the MLP transform to its causal parents, as determined by the target adjacency matrix $\Sigma^{\boldsymbol{Y}}$. We then add zero-mean Gaussian noise. The scale of the posterior additive noise for the outcome is referred to as the \emph{noise-to-signal ratio} (NTS).

\noindent\textbf{Description of the experiments.} We are given a dataset as described above with $m$ potential causes and a fixed NTS for the generation of the outcome $\boldsymbol{Y}$. We determine which series $\boldsymbol{X}^1, \dots, \boldsymbol{X}^n$ are the causal parents of the outcome, using Algorithm \ref{alg}. For a given choice of $m$ and NTS, we repeat the runs five times. This experiment is repeated for an increasing number of potential causes, and increasing noise-to-signal ratio, to evaluate the performance of Algorithm \ref{alg} on challenging datasets.

In this set of experiments, we learn $\eta_j^i$ as in Line \ref{alg:regression} of Algorithm \ref{alg}, using for the regression task an MLP model. We found that this simple approach, combined with zero-masking, dramatically reduces the false positives of the Student's t-test in Line \ref{alg:test} of Algorithm \ref{alg}.

\noindent\textbf{Results}
\begin{figure*}[tb]

    \makebox[490pt][c]{
    \begin{subfigure}[t]{1.8in}
        \centering
        \includegraphics[width=1.8in]{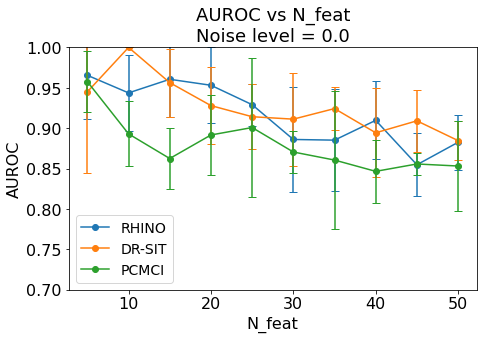}
    \end{subfigure}
       \begin{subfigure}[t]{1.8in}
        \centering
        \includegraphics[width=1.8in]{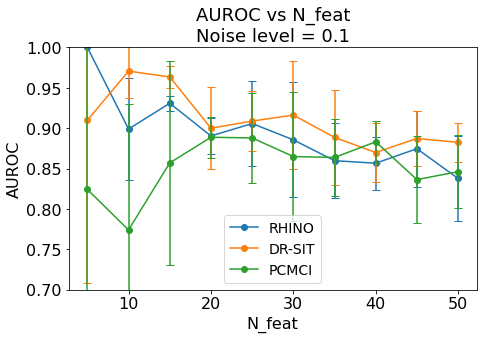}
    \end{subfigure}
   \begin{subfigure}[t]{1.8in}
        \centering
        \includegraphics[width=1.8in]{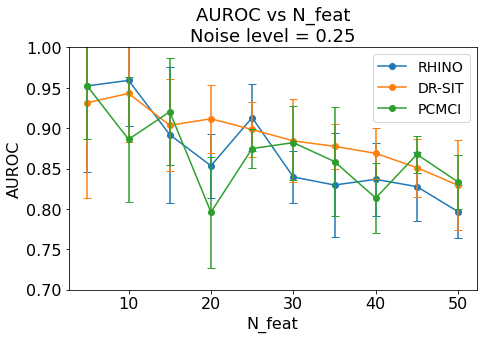}
    \end{subfigure}
    }
 \caption{AUROC metric for DR-SIT, RHINO and PCMCI for various noise levels on the synthetic dataset.}\label{fig:AUROC_synthetic_dataset_small}
\end{figure*}
In Figure~\ref{fig:AUROC_synthetic_dataset_small}) (also shown in greater detail in Appendix Figure~\ref{AUROC_synthetic_dataset}) we plot the AUROC performance of DR-SIT against RHINO and PCMCI on the synthetic dataset, confirming the competitive performance of DR-SIT. In order to calculate the AUROC for DR-SIT, we sort our predictions (for existence of an edge)  on
   the 
   standard deviation of 
   $Z:= Y_T \cdot \tilde{g}_j^i + \tilde{\alpha}_j^i \cdot (Y_T - \tilde{g}_j^i ) - Y_T \cdot g_j^0 - \alpha_j^0 \cdot (Y_T - g_j^0 )$, which is simply the difference of the doubly robust statistics for $\theta^i$ and $\theta^0$ for a datapoint in partition $D_j$. Moreover, in Appendix \ref{sec:appendix_synthetic}, we show the performance of our method with respect to the accuracy, F1 and CSI scores (see Tablse \ref{table:snthetic1}-\ref{table:snthetic2}). We see that the \alg is stable for an increasingly higher posterior noise and scales reasonably w.r.t the number of observations. 

\if{0}
\begin{figure}[t]
    \centering
    \includegraphics[width=5in]
    {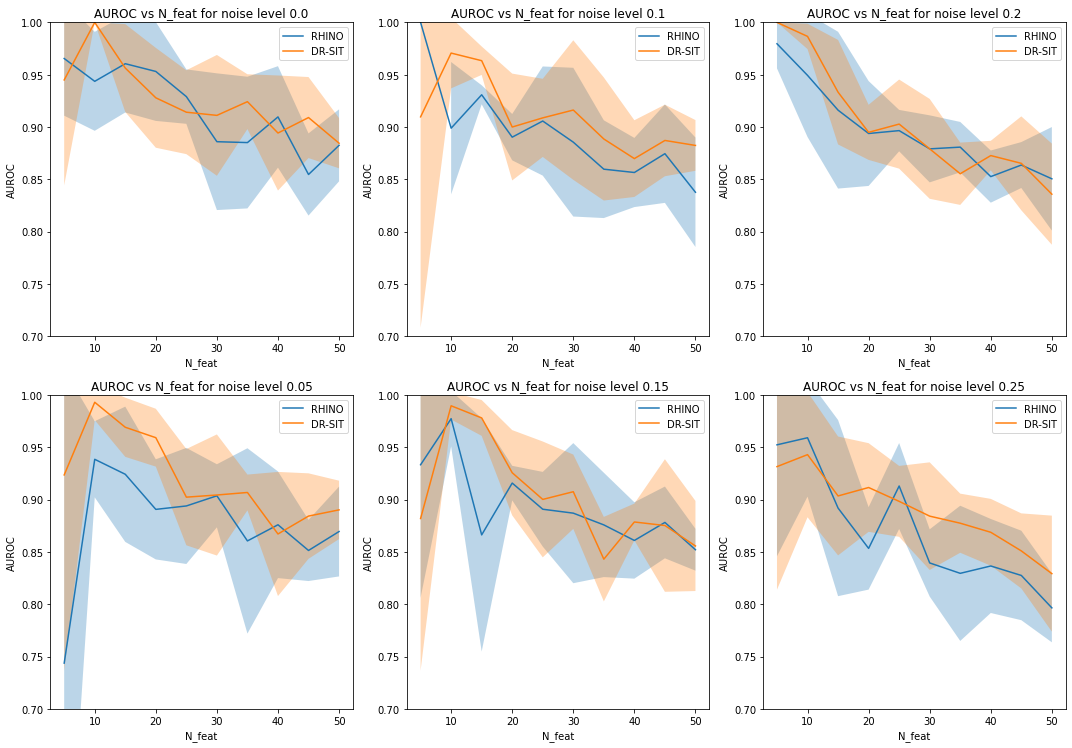}
    \caption{AUROC metric for RHINO and DR-SIT for various noise levels on the synthetic dataset.}
    \label{AUROC_synthetic_dataset}
\end{figure}
\fi
}
\subsection{Semi-Synthetic Experiments} \label{sec:exp_semi}
\noindent\textbf{The Dream3 dataset.} Following previous related work \citep{tank2018neural,khanna2019economy,DBLP:journals/make/NautaBS19,bussmann2021neural,DBLP:journals/corr/abs-2210-14706}, we evaluate performance with the Dream3 benchmark~\citep{dream31,dream32}. The Dream3 benchmark is a collection of gene expression level measurements across five different networks, where each network comprises 100 genes. The data in Dream3 are organized as time series. Specifically, for each of the five networks 46 trajectories are given; each trajectory details the progression of the 100 genes over 21 time steps after an initial perturbation in the gene values.

\noindent\textbf{Description of the experiments.} Our goal is to infer the causal structure of each network. This gives us a total of five tasks, i.e., E.Coli 1, E.Coli 2, Yeast 1, Yeast 2, and Yeast 3. We run our algorithm on this dataset and we use the area under the ROC curve (AUROC) as the performance metric. Following ~\citep{DBLP:journals/corr/abs-2210-14706}, we compare against the following baselines: cMLP~\citep{tank2018neural}, cLSTM~\citep{tank2018neural}, TCDF~\citep{DBLP:journals/make/NautaBS19}, SRU~\citep{khanna2019economy}, eSRU~\citep{khanna2019economy}, Dynotears~\citep{pamfil2020dynotears}, Rhino+g~\citep{DBLP:journals/corr/abs-2210-14706}, and Rhino~\citep{DBLP:journals/corr/abs-2210-14706}. 

\begin{figure*}[tb]

    \makebox[490pt][c]{
    \begin{subfigure}[t]{1.4in}
        \centering
        \includegraphics[width=1.4in]{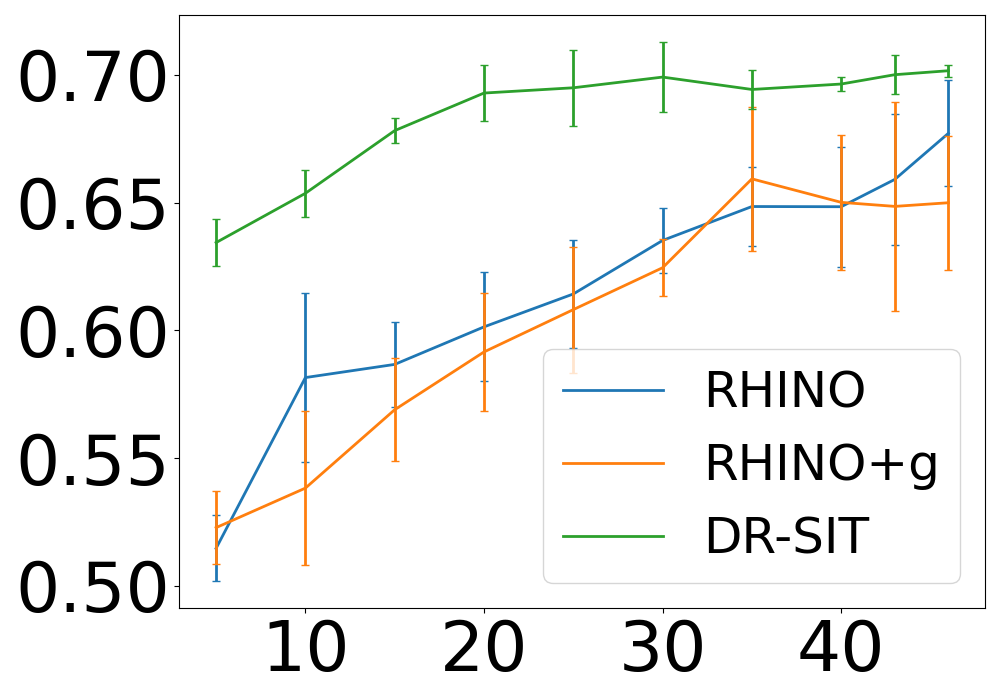}
        \caption{Task: E.Coli 1}\label{fig:1A}       
    \end{subfigure}
       \begin{subfigure}[t]{1.4in}
        \centering
        \includegraphics[width=1.4in]{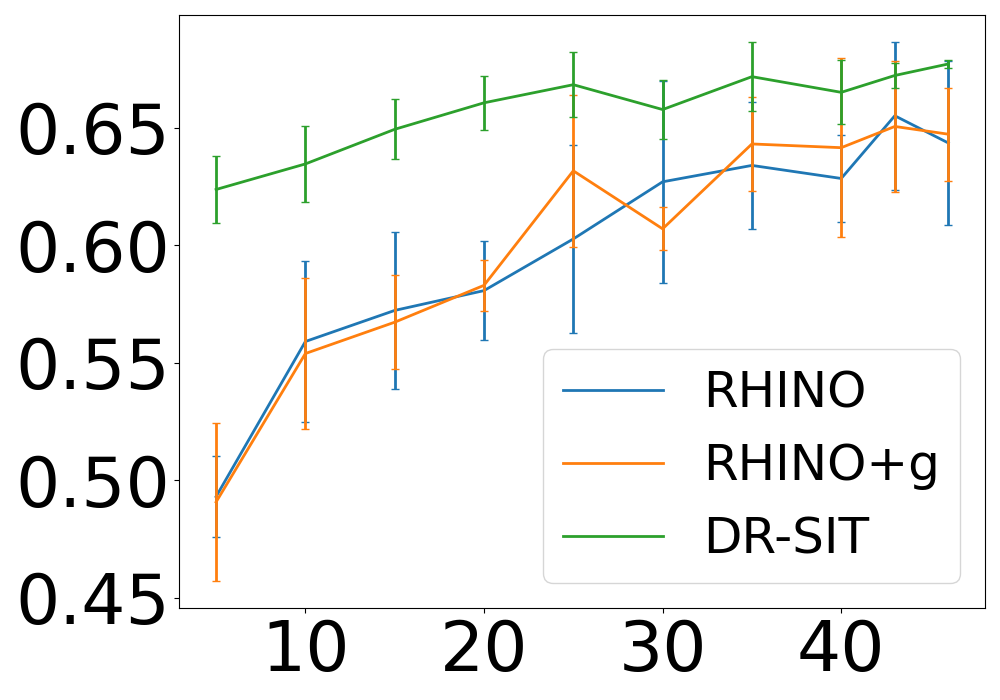}
        \caption{Task: Ecoli 2}\label{fig:1B}    
    \end{subfigure}
   \begin{subfigure}[t]{1.4in}
        \centering
        \includegraphics[width=1.4in]{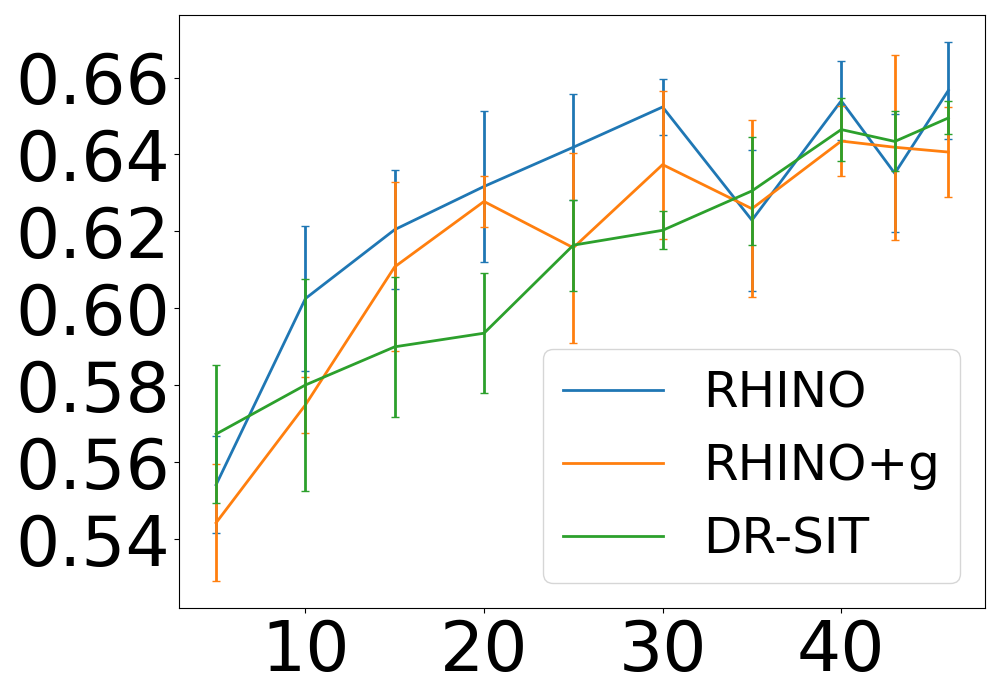}
        \caption{Task: Yeast 1}\label{fig:1C}    
    \end{subfigure}
   \begin{subfigure}[t]{1.4in}
        \centering
        \includegraphics[width=1.4in]{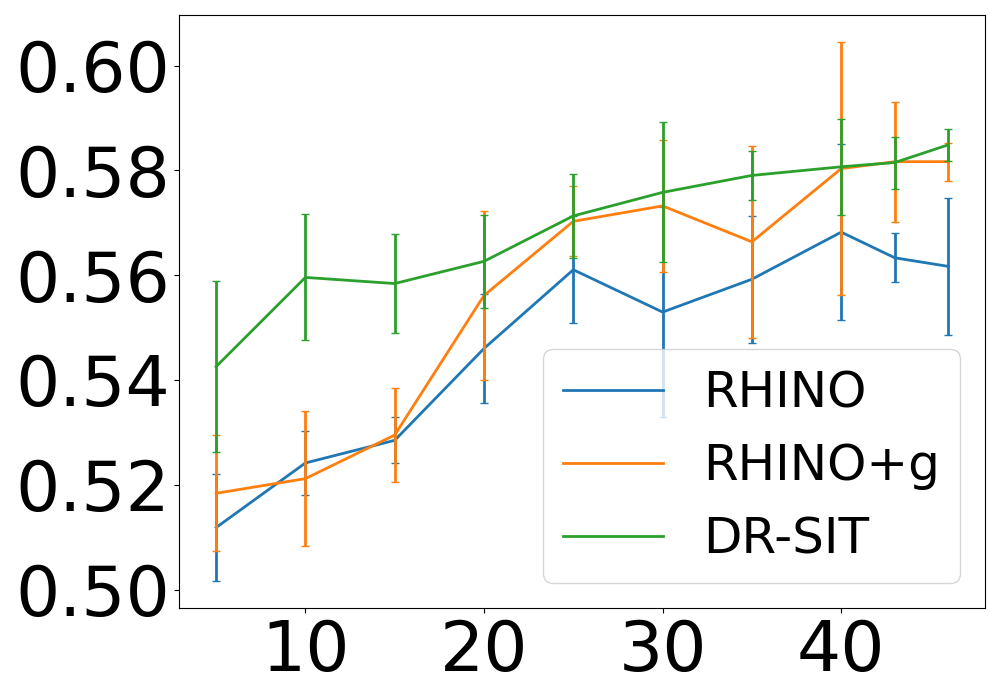}
        \caption{Task: Yeast 2}\label{fig:1D}    
    \end{subfigure}
    
    }
 \caption{This figure demonstrates the consistent performance of DR-SIT w.r.t number of training observations (i.e trajectories) compared to state-of-the-art methods Rhino and Rhino+g. For the same plots in all 5 tasks depicted in greater resolution, see \cref{fig:exp_observations} in \cref{sec:low_sample}. 
}\label{fig:low_sample_regime}
\end{figure*}

\begin{table*}[tb]
\centering
\caption{Run time and Hardware used for our method (DR-SIT) and the state-of-the-art baseline Rhino.}
{\scriptsize
\begin{tabular}{l c c}
\toprule
\textbf{Category\textbackslash Method}  & \textbf{Rhino} & \textbf{DR-SIT (ours)} \\
\midrule
Runtime  & 18 mins 40 sec $\pm$ 30 sec & 57.12 sec $\pm$ 1.6 sec \\    
\midrule
Hardware & 1 NVIDIA A100 GPU + AMD EPYC 7402 24-Core CPU & 11th Gen Core i5-1140F CPU \\
\bottomrule
\end{tabular}
}
\label{table:runtime}
\end{table*} 

\noindent\textbf{Results.} Shown in ~\cref{table: Exp DREAM3 AUROC}.
The results for cMLP, cLSTM, TCDF, SRU and SRU are taken directly from \citet{khanna2019economy,DBLP:journals/corr/abs-2210-14706}, where error bounds are not reported. Regarding Rhino+g and Rhino, we partially reproduce the experiments by \citet{DBLP:journals/corr/abs-2210-14706}, using their source code. Our implementation of Rhino+g and Rhino differs from \citet{DBLP:journals/corr/abs-2210-14706} only in the choice for the hyper-parameters, which is the same on all five tasks. We specifically use the following hyper-parameters for Rhino: Node Embedd. = $16$, Instantaneous eff. = False, Node Embedd. (flow) = $16$, lag = $2$, $\lambda_s$ = $19$, Auglag = $30$. And we use the following for Rhino+g: Node Embedd. = $16$, Instantaneous eff. = False, lag = $2$, $\lambda_s$ = $15$, Auglag = $60$. This is the setting that is reported for the Ecoli1 subtask and found in their corresponding code implementation. We opted for this approach because in the experiment by \citet{DBLP:journals/corr/abs-2210-14706}, it seems that Rhino overfitted to the dataset. We learn $\eta^i_j$ as in Line \ref{alg:regression} of Algorithm \ref{alg}, using a simple kernel ridge regression model with polynomial kernels of degree three combined with zero masking. 

We observe that \alg outperforms all the other benchmarks on three tasks (E.Coli 1, E.Coli 2, Yeast 2). Furthermore, \alg obtains comparable performance on the remaining tasks (Yeast 1,Yeast 3). We also would like to emphasize that our estimator is much simpler than deep nets such as Rhino or Rhino+g. As such, it has lower sample complexity and lower run time than the other algorithms.

\subsection{Low sample regime}
One appealing property of DR-SIT is the strong consistency of the estimators (see Section \ref{sec:approach}). Due to this property, the estimators $\hat{\theta}^0$ and $\hat{\theta^i}$ exhibit fast convergence to the true parameters. Hence, our algorithm enjoys competitive results in low sample complexity settings, than richer models such as Rhino \citep{DBLP:journals/corr/abs-2210-14706}.
%
We illustrate a compelling example of this, by comparing Rhino and Rhino+g against \alg. In this example, \alg learns nuisance parameters (Line 3 and Line 8 of Algorithm \ref{alg}) using a simple kernel ridge regression estimator with polynomial kernels of degree three. In  \cref{fig:low_sample_regime} we explicitly compare the performance of \alg, Rhino, and Rhino+g as the number of trajectories used in training varies and demonstrate significantly improved performance especially in low sample settings. 

\subsection{Run Time and Hardware}
\label{sec:runtime}

We report on the average runtime of DR-SIT and the competitive rival Rhino for experiment~\cref{table: Exp DREAM3 AUROC} across all five tasks (E.Coli 1, E.Coli 2, Yeast 1, Yeast 2 and Yeast 3) in~\Cref{table:runtime}. Despite having access only to a single CPU in contrast to the GPU-equipped execution of Rhino our method is almost 20x faster. This is because of the fact that DR-SIT algorithm will provide reasonable results even when employing simple fast efficient estimators (in this case a kernel regression). For a visual presentation of the AUROC performance progression vs runtime for various combinations of tasks and training sample sizes, see \cref{fig:auroc_vs_time_Ecoli1,fig:auroc_vs_time_Ecoli2,fig:auroc_vs_time_Yeast1,fig:auroc_vs_time_Yeast2,fig:auroc_vs_time_Yeast3} in appendix.

\section{Discussion}
In this work, we propose an efficient algorithm for doubly robust structure identification from temporal data.  We further provide asymptotical guarantees that our method is able to discover the direct causes even when there are cycles or hidden confounding and that our algorithm has $\sqrt{n}$-consistency. We extensively discuss the relations of the approach between the popular frameworks of Granger and Pearl's causality as well as relate and extend approaches from debiased machine learning to structure discovery from temporal data. We hope that our approach enables important real-world applications in bio-medicine where robustness to confounding, sample efficiency, and ease of use are important for causal discovery from observational time series.

\bibliography{main}
\bibliographystyle{tmlr}

\newpage
\appendix

\section{Direct Causal Effects and Interventions}
\label{app:interventions}

In this section, we clarify the notion of an intervention and direct causal effects. We will also introduce some notation that will be used later on in the proofs.

We consider interventions by which a random variable $X^j_t$ is set to a constant $X^j_t \gets x$. We denote with $Y_{T} \mid do(X^i_t = x)$ the outcome time series $\boldsymbol{Y}$ at time step $T$, after performing an intervention as described above. We can likewise perform multiple joint interventions, by setting a group of random variables $\boldsymbol I $ at different time steps, to pre-determined constants specified by an array $\boldsymbol i$. We use the symbol $Y_T \mid do(\boldsymbol I = \boldsymbol i)$ to denote the resulting post-interventional outcome, and we denote with $\pr{Y_T = y \mid do(\boldsymbol I = \boldsymbol i)}$ the probability of the event $\{Y_T \mid do(\boldsymbol I = \boldsymbol i) = y\}$.

Using this notation, a time series $\boldsymbol{X}^i$ has a direct effect on the outcome $\boldsymbol{Y}$, if performing different interventions on the variables $\boldsymbol{X}^i$, while keeping the remaining variables fixed, will alter the probability distribution of the outcome $\boldsymbol{Y}$. Formally, define the sets of random variables $\boldsymbol{I}_{T}\coloneqq \{X_{t}^1, \dots, X_{t}^n , Y_t\}_{ t < T}$, which consists of all the information before time step $T$. Similarly, define the random variable $\boldsymbol{X}^i_T \coloneqq \{X^i_t\}_{t < T}$, consisting of all the information of time series $\boldsymbol{X}^i$ before time step $T$. Define the variable $\boldsymbol{I}_{T}^{\setminus i}\coloneqq \boldsymbol{I}_{T} \setminus \boldsymbol{X}^i_T$, which consists of all the variables in $\boldsymbol{I}_T$ except for $\boldsymbol{X}^i_T$. Then, a time series $\boldsymbol{X}^i$ has a direct effect on the outcome $ \boldsymbol{Y}$ if it holds
\begin{equation}
\label{eq:direct_effect}   
\pr {Y_T = y \mid do \left (\boldsymbol{X}^i_T = \boldsymbol{x}' , \boldsymbol{I}_{T}^{\setminus i} =  \boldsymbol{i} \right )} \neq \pr {Y_T = y \mid do \left (\boldsymbol{X}^i_T = \boldsymbol{x}'' , \boldsymbol{I}_{T}^{\setminus i} =  \boldsymbol{i} \right )}
\end{equation}
We say that a time series $\boldsymbol{X}^i$ causes $\boldsymbol{Y}$, if there is a direct effect between $\boldsymbol{X}^i$ and $\boldsymbol{Y}$ as in Eq. \ref{eq:direct_effect}, for any time step $T$.
%
%
\section{Necessity of the Statistical Independence of $\varepsilon_t$}
\label{sec:counterexample}
We provide a counterexample, to show that if there are dependencies between the noise and the historical data, then the causal structure may not be identifiable from observational data. To this end, we consider a first dataset $\{X_t, Y_t\}_{t \in \mathbb{Z}}$, defined as
\begin{equation*}
\begin{array}{l}
X_{t-1}, Y_t \sim \mathcal{N}(\boldsymbol 0, \Sigma) 
\end{array}
\end{equation*}
Here, $\mathcal{N}(\boldsymbol 0, \Sigma)$ is a zero-mean joint Gaussian distribution with covariance matrix
\begin{equation*}
\Sigma = \left [
\begin{array}{cc}
1 & 1 \\
1 & 1
\end{array}
\right ].
\end{equation*}
We also consider a second dataset $\{W_t, Z_t\}_{t \in \mathbb{Z}}$, defined as 
\begin{equation*}
W_t \sim \mathcal{N}(0, 1), \quad Z_t = W_{t-1} 
\end{equation*}
The parameter $\Sigma$ is defined as above. Both datasets entail the same joint probability distribution. However, the causal diagrams change from one dataset to the other. Hence, the causal structure cannot be recovered from observational data, if the posterior additive noise $\varepsilon_t$ is correlated with some of the covariates.

\section{Necessity of No Instantaneous Causal Effects between $\boldsymbol{Y}$ and the Potential Causes $\boldsymbol{X}^i$} \label{app:instantaneous}

Here, we provide a counterexample to show that without the no instantaneous causal effect, the causal structure may not be identifiable from observational data. Consider the following two models:

\begin{itemize}
    \item Model 1: We consider time series $\{X_t\}$, $\{Y_t\}$ of the form $X_t = \mathbb{E}[X_{t-1}] + c$ and $Y_t = \mathbb{E}[Y_{t-1}-X_{t-1}] + X_t$. In this model, $c$ is a random variable drawn from a Gaussian distribution with a mean of 0 and covariance of 1.

    \item Model 2: We consider time series $\{X_t\}$, $\{Y_t\}$ of the form $Y_t = \mathbb{E}[Y_{t-1}] + c$ and $X_t = \mathbb{E}[X_{t-1}-Y_{t-1}] + Y_t$. In this model, $c$ is a random variable drawn from a Gaussian distribution with a mean of 0 and covariance of 1.
\end{itemize}
Both models entail the same joint distribution. However, in Model 1, $X$ has a causal effect on $Y$, whereas in Model 2, $Y$ has a causal effect on $X$. Hence, in this example, the causal structure is not identifiable.

\section{Proof of Theorem \ref{thm:granger}}
\label{app:granger}
We prove the following result.
\granger*
\begin{proof}
We first prove that it holds
\begin{equation}
\label{eq:2_new}
    \pr{Y_T = y \mid do(X^i_{t} = x,\boldsymbol{I}_{T}^{\setminus i} = \boldsymbol{i})} = \pr{Y_T = y \mid X^i_{t} = x,\boldsymbol{I}_{T}^{\setminus i} = \boldsymbol{i}}
\end{equation}
for any non-zero event $\{ Y_T = y \}$. To this end, define the group $\boldsymbol{P} $ consisting of all the causal parents of the outcome. Note that $\boldsymbol{P} \subseteq \{X^i_t, \boldsymbol{I}_{T}^{\setminus i}\} $. By \ref{cond:1}, the outcome can be described as $Y = f(\boldsymbol{P}) + \varepsilon$, where $\varepsilon$ is independent of $\{X^i_t, \boldsymbol{I}_{T}^{\setminus i} \}$. Hence by Rule 2 of the do-calculus (see \cite{pearlj}, page~85) Eq. \ref{eq:2_new} holds, since $Y$ becomes independent of $\{X^i_t, \boldsymbol{I}_{T}^{\setminus i} \}$ once all arrows from $\boldsymbol{P}$ to $Y$ are removed from the graph of the DGP.

We now prove the claim using Eq. \ref{eq:2_new}. To this end, assume that Eq. \ref{eq:2_new} holds and suppose that $X^i$ does not Granger causes $Y$, i.e., it holds
\begin{equation}
\label{eq:granger}
\pr{Y_T=y\mid X^i_{t} = x,\boldsymbol{I}_{T}^{\setminus i} = \boldsymbol{i}} = \pr{Y_T=y\mid  \boldsymbol{I}_{T}^{\setminus i} = \boldsymbol{i}},
\end{equation}
for any non-zero event $\{Y_T = y\}$. Then, 
\begin{align*}
    \pr{Y_T=y\mid do(X^i_{t} = x,\boldsymbol{I}_{T}^{\setminus i} = \boldsymbol{i})} & = \pr{Y_T=y\mid X^i_{t} = x,\boldsymbol{I}_{T}^{\setminus i} = \boldsymbol{i}} & [\text{Eq. \ref{eq:2_new}}] \\
    & = \pr{Y_T=y\mid  \boldsymbol{I}_{T}^{\setminus i} = \boldsymbol{i}} & [\text{Eq. \ref{eq:granger}}] \\
    & = \pr{Y_T=y\mid X^i_{t} = x',\boldsymbol{I}_{T}^{\setminus i} = \boldsymbol{i}} & [\text{Eq. \ref{eq:granger}}] \\
    & = \pr{Y_T=y\mid do(X^i_{t} = x',\boldsymbol{I}_{T}^{\setminus i} = \boldsymbol{i})}. & [\text{Eq. \ref{eq:2_new}}]
\end{align*}
Hence, causality implies Granger causality.

We now prove that Granger causality implies causality. To this end, suppose that $X^i$ is not a potential cause of $Y$. By the definition of direct effects, it holds
\begin{align}
& \pr{Y_T=y\mid do(X^i_{t} = x,\boldsymbol{I}_{T}^{\setminus i} = \boldsymbol{i})} \nonumber \\
& \qquad \qquad \qquad = \expect{}{\pr{Y_T=y\mid do(X^i_{t} = x', \boldsymbol{I}_{T}^{\setminus i} = \boldsymbol{i})} \mid \boldsymbol{I}_{T}^{\setminus i} = \boldsymbol{i}}. \label{eq:lemma101}
\end{align}
Hence,
\begin{align*}
    & \pr{Y_T=y\mid do(X^i_{t} = x,\boldsymbol{I}_{T}^{\setminus i} = \boldsymbol{i})} & \\
    & \qquad  \qquad = \pr{Y_T=y\mid X^i_{t} = x,\boldsymbol{I}_{T}^{\setminus i} = \boldsymbol{i}} & [\text{Eq. \ref{eq:2_new}}]\\
    & \qquad  \qquad = \expect{}{\pr{Y_T=y\mid do(X^i_{t}, \boldsymbol{I}_{T}^{\setminus i})}\mid \boldsymbol{I}_{T}^{\setminus i} = \boldsymbol{i}} & [\text{Eq. \ref{eq:lemma101}}]\\
    & \qquad  \qquad = \expect{}{\pr{Y_T=y\mid X^i_{t} , \boldsymbol{I}_{T}^{\setminus i} }\mid \boldsymbol{I}_{T}^{\setminus i} = \boldsymbol{i}} & [\text{Eq. \ref{eq:2_new}}]\\
    & \qquad  \qquad = \pr{Y_T=y\mid  \boldsymbol{I}_{T}^{\setminus i} = \boldsymbol{i}}, & 
\end{align*}
and the claim follows. 
\end{proof}
\section{Proof of Theorem \ref{lemma:cond_variance}}
\label{app:cond_variance}
We prove the following result.
\cond*
In order to prove Theorem \ref{lemma:cond_variance}, we use the following auxiliary lemma.
\begin{lemma}
\label{lemma:expect}
Consider a causal model as in \ref{cond:1}-\ref{cond:3}. Then, the following conditions are equivalent:
\begin{enumerate}
    \item $\expect{}{Y_T \mid X^i_{t} = x,\boldsymbol{I}_{T}^{\setminus i} = \boldsymbol{i}} = \expect{}{Y_T \mid X^i_{t} = x', \boldsymbol{I}_{T}^{\setminus i} = \boldsymbol{i}}$  a.s. ; \label{itemone}
    \item $\pr{Y_T = y \mid X^i_{t} = x,\boldsymbol{I}_{T}^{\setminus i} = \boldsymbol{i}} = \pr{Y_T=y \mid X^i_{t} = x', \boldsymbol{I}_{T}^{\setminus i} = \boldsymbol{i}}$ a.s. \label{itemtwo}
\end{enumerate}
\end{lemma}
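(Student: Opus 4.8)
The plan is to establish the two implications separately; the reverse direction (\ref{itemtwo}) $\Rightarrow$ (\ref{itemone}) is immediate and the forward direction (\ref{itemone}) $\Rightarrow$ (\ref{itemtwo}) carries all the content. For (\ref{itemtwo}) $\Rightarrow$ (\ref{itemone}): if the two conditional laws of $Y_T$ agree almost surely, then in particular their first moments agree, so integrating $y$ against each conditional distribution yields the equality of conditional means in (\ref{itemone}). (Integrability of $Y_T$ is presupposed by (\ref{itemone}) and follows from \ref{cond:1} once $\varepsilon_T$ is integrable.)

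For (\ref{itemone}) $\Rightarrow$ (\ref{itemtwo}), the key observation is that, under \ref{cond:1}-\ref{cond:3}, the conditional distribution of $Y_T$ given the pre-$T$ history lies in the location family generated by $\mathcal{L}(\varepsilon_T)$. Indeed, by \ref{cond:1} the parent set $\mathsf{pa}_T(\boldsymbol{Y})$ consists only of $\boldsymbol{X}$-variables, by \ref{cond:2} none of them is in the future of $T$, and by \ref{cond:3} none is contemporaneous with $T$; hence $\mathsf{pa}_T(\boldsymbol{Y})$ is a subset of the strictly-pre-$T$ variables, all of which lie in the conditioning event $\{X^i_t = x,\, \boldsymbol{I}_T^{\setminus i} = \boldsymbol{i}\}$ (which, exactly as in the proof of Theorem~\ref{thm:granger}, specifies the entire pre-$T$ history). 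Consequently $f(\mathsf{pa}_T(\boldsymbol{Y}), T)$ equals a deterministic constant $h(x,\boldsymbol{i})$ on this event, and since $\varepsilon_T$ is independent of the pre-$T$ history, the model $Y_T = f(\mathsf{pa}_T(\boldsymbol{Y}), T) + \varepsilon_T$ gives
\[
\mathcal{L}\bigl(Y_T \mid X^i_t = x,\, \boldsymbol{I}_T^{\setminus i} = \boldsymbol{i}\bigr) \;=\; \mathcal{L}\bigl(h(x,\boldsymbol{i}) + \varepsilon_T\bigr),
\]
the translate of $\mathcal{L}(\varepsilon_T)$ by $h(x,\boldsymbol{i})$, with conditional mean $h(x,\boldsymbol{i}) + \expect{}{\varepsilon_T}$. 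The same identities hold with $x$ replaced by $x'$. Since the map $h \mapsto h + \expect{}{\varepsilon_T}$ is a bijection, condition (\ref{itemone}) forces $h(x,\boldsymbol{i}) = h(x',\boldsymbol{i})$ a.s., and then the two translates of $\mathcal{L}(\varepsilon_T)$ coincide, which is precisely (\ref{itemtwo}).

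The main obstacle I anticipate is purely measure-theoretic rather than conceptual: one must (i) make precise, via a regular conditional distribution, that conditioning on $\{X^i_t = x,\, \boldsymbol{I}_T^{\setminus i} = \boldsymbol{i}\}$ renders $f(\mathsf{pa}_T(\boldsymbol{Y}), T)$ a constant, using that this quantity is measurable with respect to the conditioning $\sigma$-algebra; (ii) invoke the independence of $\varepsilon_T$ from that $\sigma$-algebra to factor the conditional law of $Y_T$ as a translate of $\mathcal{L}(\varepsilon_T)$; and (iii) track the almost-sure qualifiers, since conditional expectations and regular conditional distributions are only defined up to null sets, so ``$h(x,\boldsymbol{i}) = h(x',\boldsymbol{i})$ a.s.'' is to be read as an equality of the corresponding measurable functions of the conditioning variables. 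Each of these steps is routine, but they must be carried out to make the location-family argument airtight.
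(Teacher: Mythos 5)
Your proof is correct and is essentially the paper's own argument: both reduce the conditional law of $Y_T$ to a location shift of $\mathcal{L}(\varepsilon_T)$ by the (constant) value of $f$ on the conditioning event, so that equality of conditional means pins down the shift and hence the entire conditional distribution. The only cosmetic difference is that the paper routes through the do-operator and the identity $\pr{Y_T=y\mid do(\cdot)}=\pr{Y_T=y\mid \cdot}$ established in the proof of Theorem~\ref{thm:granger}, whereas you argue directly with observational conditionals via measurability of the parents with respect to the conditioning $\sigma$-algebra and independence of $\varepsilon_T$.
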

\begin{proof}
Clearly, Item \ref{itemtwo} implies Item \ref{itemone}. 

We now prove the converse, i.e., we show that Item \ref{itemone} implies Item \ref{itemtwo}. To this end,
define the group $\boldsymbol{P}_T $ consisting of all the causal parents of $Y_T$. Note that it holds $\boldsymbol{P}_T \subseteq \{\boldsymbol{I}_{T}^{\setminus i}, X^i_{t}\} \subseteq \{\boldsymbol{I}_{T}^{\setminus i}, X^i_{t}\}$. Hence, the joint intervention $\{ X^i_{t},\boldsymbol{i}^{i,t}_{T-1}\} \gets \{x,\boldsymbol{i}\}$ define an intervention on the parents $\boldsymbol{P}_T \gets \boldsymbol{p}$. Further, we can write the potential outcome as 
\begin{equation}
\label{eq:new_eq1}
Y_T\mid do(X^i_{t} = x,\boldsymbol{I}_{T}^{\setminus i} = \boldsymbol{i}) = f(\boldsymbol{p}) + \varepsilon . 
\end{equation}
Similarly, the joint intervention $\{X^i_{t},\boldsymbol{I}^{i,t}_{T-1}\} \gets \{x,\boldsymbol{i}\}$, define an intervention on the parents $\boldsymbol{P}_T \gets \boldsymbol{p}'$. We can write the potential outcome as 
\begin{equation}
\label{eq:new_eq2}
Y_T\mid do(X^i_{t} = x',\boldsymbol{I}_{T}^{\setminus i} = \boldsymbol{i}) = f(\boldsymbol{p}') + \varepsilon .
\end{equation}
Hence, it holds
\begin{align}
    f(\boldsymbol{p}) + \expect{}{\varepsilon} & = \expect{}{Y_T\mid do(X^i_{t} = x,\boldsymbol{I}_{T}^{\setminus i} = \boldsymbol{i})} & [\text{Eq. \ref{eq:new_eq1}}] \nonumber \\ 
    & = \expect{}{Y_T\mid X^i_{t} = x,\boldsymbol{I}_{T}^{\setminus i} = \boldsymbol{i}} & [\text{Eq. \ref{eq:2_new}, Theorem \ref{thm:granger}}] \nonumber \\ 
    & = \expect{}{Y_T\mid X^i_{t} = x',\boldsymbol{I}_{T}^{\setminus i} = \boldsymbol{i}} & [\text{by assumption}] \nonumber \\ 
    & = \expect{}{Y_T\mid do(X^i_{t} = x',\boldsymbol{I}_{T}^{\setminus i} = \boldsymbol{i})} & [\text{Eq. \ref{eq:2_new}, Theorem \ref{thm:granger}}] \nonumber \\ 
    & = f(\boldsymbol{p}')+ \expect{}{\varepsilon}.  & [\text{Eq. \ref{eq:new_eq2}}] \nonumber
\end{align}
By \ref{cond:1}, the variable $\varepsilon$ is exogenous independent noise. From the chain of equations above it follows that $f(\boldsymbol{p}) = f(\boldsymbol{p}')$. Hence, 
\begin{align}
& \pr{Y_T = y\mid do(X^i_{t} = x,\boldsymbol{I}_{T}^{\setminus i} = \boldsymbol{i})} = \pr{f(\boldsymbol{p}) + \varepsilon = y} \nonumber \\ 
& \qquad \qquad \qquad = \pr{f(\boldsymbol{p}') + \varepsilon = y} = \pr{Y_T = y\mid do(X^i_{t} = x',\boldsymbol{I}_{T}^{\setminus i} = \boldsymbol{i})} \label{eq:new_3}
\end{align}
We conclude that it holds
\begin{align*}
    & \pr{Y_T=y\mid X^i_{t} = x,\boldsymbol{I}_{T}^{\setminus i} = \boldsymbol{i}} & \\
    & \qquad = \pr{Y_T\mid do(X^i_{t} = x,\boldsymbol{I}_{T}^{\setminus i} = \boldsymbol{i})} & [\text{Eq. \ref{eq:2_new}, Theorem \ref{thm:granger}}] \\ 
    & \qquad = \pr{Y_T=y\mid do(X^i_{t} = x',\boldsymbol{I}_{T}^{\setminus i} = \boldsymbol{i})} & [\text{Eq. \ref{eq:new_3}}] \\ 
    & \qquad = \pr{Y_T=y\mid X^i_{t} = x',\boldsymbol{I}_{T}^{\setminus i} = \boldsymbol{i}}, & [\text{Eq. \ref{eq:2_new}, Theorem \ref{thm:granger}}]
\end{align*}
as claimed.
\end{proof}
We can now prove the main result.
\begin{proof}[Proof of Theorem \ref{lemma:cond_variance}]
We first prove that $X^i$ Granger causes $Y$ iff. it holds 
\begin{equation}
\label{eq:chi}
\expect{}{\left ( \expect{}{Y_T\mid X^i_{t} ,\boldsymbol{I}_{T}^{\setminus i} } - \expect{}{Y_T\mid \boldsymbol{I}_{T}^{\setminus i} } \right )^2} \neq 0.
\end{equation}

First, suppose that Eq. \ref{eq:chi} does not hold. Then, it holds $\mathbb{e}[Y_T \mid X^i_{t} = x,\boldsymbol{I}_{T}^{\setminus i} = \boldsymbol{i}] = \mathbb{E}[Y_T \mid X^i_{t} = x',\boldsymbol{I}_{T}^{\setminus i} = \boldsymbol{i}]$, a.s. Combining this equation with Lemma \ref{lemma:expect} yields
\begin{align*}
    \pr{Y_T = y \mid X^i_{t} = x,\boldsymbol{I}_{T}^{\setminus i} = \boldsymbol{i}} & = \expect{}{\pr{Y_T = y \mid X^i_{t} ,\boldsymbol{I}_{T}^{\setminus i} } \mid \boldsymbol{I}_{T}^{\setminus i} = \boldsymbol{i}}\\
    & = \pr{Y_T = y \mid \boldsymbol{I}_{T}^{\setminus i} = \boldsymbol{i}},
\end{align*}
a.s. Hence, if $X^i$ Granger causes $Y$, then Eq. \ref{eq:chi} holds.
\begin{align}
\label{eq:chi2}
    \expect{}{Y_T \mid X^i_{t} = x,\boldsymbol{I}_{T}^{\setminus i} = \boldsymbol{i}} \neq \expect{}{Y_T \mid X^i_{t} = x',\boldsymbol{I}_{T}^{\setminus i} = \boldsymbol{i}},
\end{align}
for a triple $\{x, x', \mathbf{w} \}$. By combining Eq. \ref{eq:chi2} with Lemma \ref{lemma:expect} we see that Eq. \ref{eq:chi} implies causality. However, by Theorem \ref{thm:granger} Granger causality is equivalent to causality in this case.

We now prove the claim. By the tower property of the expectation~\cite{Williams-1991} that
\begin{align*}
    & \expect{}{\left ( \expect{}{Y_T\mid X^i_{t},\boldsymbol{I}_{T}^{\setminus i}} - \expect{}{Y_T\mid \boldsymbol{I}_{T}^{\setminus i}} \right )^2} \\
    & =  \expect{}{\left (\expect{}{Y_T\mid X^i_{t},\boldsymbol{I}_{T}^{\setminus i}} - \expect{}{Y_T\mid \boldsymbol{I}_{T}^{\setminus i}} \right )^2} \\
    & = \expect{}{\expect{}{\left (\expect{}{Y_T\mid X^i_{t},\boldsymbol{I}_{T}^{\setminus i}} - \expect{}{Y_T\mid \boldsymbol{I}_{T}^{\setminus i}} \right )^2\mid \boldsymbol{I}_{T}^{\setminus i}}} \\
    & = \expect{}{\expect{}{\left (\expect{}{Y_T\mid X^i_{t},\boldsymbol{I}_{T}^{\setminus i}}^2 - \expect{}{Y_T\mid X^i_{t},\boldsymbol{I}_{T}^{\setminus i}}  \expect{}{Y_T\mid \boldsymbol{I}_{T}^{\setminus i}} \right )\mid \boldsymbol{I}_{T}^{\setminus i}}} \\
    & = \expect{}{\expect{}{Y_T\mid X^i_{t},\boldsymbol{I}_{T}^{\setminus i}}^2} - \expect{}{\expect{}{\left (\expect{}{Y_T\mid X^i_{t},\boldsymbol{I}_{T}^{\setminus i}}  \expect{}{Y_T\mid \boldsymbol{I}_{T}^{\setminus i}}\right )\mid \boldsymbol{I}_{T}^{\setminus i}}}\\
    & = \expect{}{\expect{}{Y_T\mid X^i_{t},\boldsymbol{I}_{T}^{\setminus i}}^2} - \expect{}{\expect{}{Y_T\mid \boldsymbol{I}_{T}^{\setminus i}}^2}\\
    & = \expect{}{Y_T\expect{}{Y_T\mid X^i_{t},\boldsymbol{I}_{T}^{\setminus i}}} - \expect{}{Y_T\expect{}{Y_T\mid \boldsymbol{I}_{T}^{\setminus i}}},
\end{align*}
as claimed.
\end{proof}

\section{Intuition on Zero-Masking}
\label{app:zero-mask}
We provide intuition why masking is a reasonably good idea. Assume that the function $\widehat{f}$ is a $\epsilon$-close estimator of the true function $f^*$ in the $\mathcal{L}^2(P_{X_1, X_2, \dots, X_m})$ norm, where the functions $\widehat{f}, f^*$ and the joint probability distribution $P_{X_2, \dots, X_m}$ are defined on the set $\{X_1, X_2, \dots, X_m\}$,

$$ \|\widehat{f} - f^*\|_{\mathcal{L}^2(P_{X_1, X_2, \dots, X_m})} \leq \epsilon$$

Now, let's mask the random variable $X_1$. We are interested to see how close is the estimator $\mathbb{E}_{X_1}\widehat{f}$ to the true function $\mathbb{E}_{X_1}f^*$ in the $\mathcal{L}^2(P_{X_2, \dots, X_m})$ norm, where the functions $\mathbb{E}_{X_1}\widehat{f}, \mathbb{E}_{X_1}f^*$ and the marginal probability distribution $P_{X_2, \dots, X_m}$ are defined on the rest of variables $\{X_2, \dots, X_m\}$, By Jensen's inequality, we infer that for any realization of $X_2=x_2, X_3=x_3, \dots, X_m=x_m$,
\begin{align*}
(\mathbb{E}_{X_1} \widehat{f}(X_1, x_2, \dots, x_m) & - \mathbb{E}_{X_1} f^*(X_1, x_2, \dots, x_m))^2 \\ & \leq \mathbb{E}_{X_1} [(\widehat{f}(X_1, x_2, \dots, x_m) - f^*(X_1, x_2, \dots, x_m))^2]
\end{align*}
Plugging it in the $\epsilon$-closeness assumption leads to,

$$ \|\mathbb{E}_{X_1}\widehat{f} - \mathbb{E}_{X_1}f^*\|_{\mathcal{L}^2(P_{X_2, \dots, X_m})} \leq \|\widehat{f} - f^*\|_{\mathcal{L}^2(P_{X_1, X_2, \dots, X_m})} \leq \epsilon, $$
which guarantees that $\mathbb{E}_{X_1}\widehat{f}$ is also $\epsilon$-closs to $\mathbb{E}_{X_1}f^*$ and hence it's a good estimator. In the sequel, a natural solution would be to estimate $\mathbb{E}_{X_1}\widehat{f}$ by taking averages of $\widehat{f}$ over different samples of $X_1$. However, for the linear regression problem that the estimator has a linear structure of the input, it is straightforward to show that it is enough to evaluate $\widehat{f}$ at $\mathbb{E} [X_1]$. And finally due to the zero-centering step of data preprocessing, $\mathbb{E} [X_1] = 0$. Thus, the aforementioned procedure is equivalent to zero-masking.

\section{A Note on the Number of Partitions} \label{app:k}
The number of partitions $k$ affects the performance of our algorithm in practice since a larger number of partitions will help in removing a bias in the estimates. However, in our experiments, we observe that a small number of partitions is sufficient to achieve good results. Furthermore, an excessive number of random partitions may have a detrimental effect, due to the possible small number of samples in each partition. Hence, we believe that the number of partitions will not drastically affect performance in practice. Reasonable choices of $k$ for our experiments range between 3-7, hence $k = \mathcal{O}(1)$ w.r.t. parameters of the problem. Thus, the resulting runtime can be reported as $\mathcal{O}(md)$.

\section{Extension to Full Causal Discovery} \label{sec:full}
It is possible to use Algorithm \ref{alg} for full causal discovery, for fully-observed acyclic auto-regressive models with no instantaneous effects (see \citet{DBLP:conf/nips/PetersJS13,DBLP:conf/clear2/LoweMSW22} for a precise definition of this restricted framework). In fact, under these more restrictive assumptions, we can identify the causes of each random variable of the model by testing Granger causality. For these models, we can learn the full summary graph, by identifying the causes of each variable with Algorithm \ref{alg}. The resulting run time can be quantified as $\mathcal{O}(mdk)$, where $d$ is the time complexity of performing a regression, as outlined above, $m$ is the number of time series considered, and $k$ is the number of dataset partitions used for double cross-fitting. Since $k = \mathcal{O}(1)$ w.r.t. parameters of the problem (see~\cref{app:k}), the runtime of DR-SIT is $\mathcal{O}(md)$.

\section{Computational Complexity and Comparison}
\label{app:comp}

As discussed in~\cref{app:related_work}, compared to DR-SIT, conditional independence-based approaches such as PCMCI~\citep{runge2019detecting}, PCMCI+~\citep{runge2020discovering}, and LPCMCI~\citep{gerhardus2020high} face exponential computational barriers. It is widely known that even endowed with a perfect infinite sample independence testing oracle, learning Bayesian Networks becomes NP-Hard~\citep{chickering2004large,chickering1996learning}. Consequently, computational challenges arise not only due to the nature of the conditional independence tests themselves but also from the computational intractability of searching through the exponentially large space of possible network structures. Hence, the runtime of the $\mathcal{O}(m)$ number of regressions that DR-SIT demands is negligible compared to the exponential number of conditional independence tests from lengthy time-series. To support this argument in practice, we provide a runtime comparison between DR-SIT and PCMCI+ w.r.t. the number of nodes $m$ in Table \ref{table:runtime_PCMCI}.

\begin{table}[t]
\centering
\caption{Table with runtime means and standard deviations for DR-SIT and PCMCI+ (in seconds).
  \label{table:runtime_PCMCI}}
\resizebox{\textwidth}{!}{
\begin{tabular}{lcccccccc}
      \toprule
       & 10 & 20 & 30 & 40 & 50 & 100 & 200 & 400\\
      \midrule
      DR-SIT  &  $42 \pm 13$ & $35 \pm 6$ & $29 \pm 1$ & $30 \pm 4$ & $35 \pm 15$ & $41 \pm 6$ & $62 \pm 6$& $77 \pm 10 $\\
      PCMCI+& $4.2 \pm 0.4$ & $18.6 \pm 0.8$ & $48.6 \pm 1.4$ & $99.2 \pm 10$ & $216.4 \pm 42.2$ & $1091 \pm 65$ & $5678 \pm 264 $ & $\approx 8$ hours\\
      \bottomrule
  \end{tabular}}
\end{table} 
%
\section{Additional Experiments}
\label{sec:appendix_experiments}

\subsection{Results for Synthetic Experiments}\label{sec:appendix_synthetic}

\begin{table}[ht]
  \centering
  \caption{Accuracy of our method for increasing number of potential causes $m$, and different noise-to-signal ration (\textsc{NSR}). We observe that our method maintains good accuracy, even in challenging settings with many potential causes and high noise.}
  \label{table:snthetic1}
{\scriptsize
\begin{tabular}{lccccccc}
\toprule
   & \multicolumn{7}{c}{\textbf{Accuracy}} \\
\cmidrule(lr){2-8}
$\boldsymbol{\feat}$   & $\boldsymbol{\nsr=0}$            &$\boldsymbol{\nsr=0.05}$        & $\boldsymbol{\nsr=0.1}$          &$\boldsymbol{\nsr=0.15}$        &$\boldsymbol{\nsr=0.2}$         &$\boldsymbol{\nsr=0.25}$        & $\boldsymbol{\nsr=0.3}$         \\
\hline
       $\boldsymbol{5}$ & 0.60 $\pm$ 0.09 & 0.74 $\pm$ 0.25 & 0.66 $\pm$ 0.19 & 0.90 $\pm$ 0.11 & 0.80 $\pm$ 0.06 & 0.82 $\pm$ 0.10 & 0.94 $\pm$ 0.12 \\
       $\boldsymbol{10}$& 0.99 $\pm$ 0.02 & 0.92 $\pm$ 0.08 & 0.97 $\pm$ 0.04 & 0.79 $\pm$ 0.18 & 0.89 $\pm$ 0.06 & 0.94 $\pm$ 0.04 & 0.90 $\pm$ 0.08 \\
       $\boldsymbol{15}$& 0.89 $\pm$ 0.05 & 0.89 $\pm$ 0.05 & 0.88 $\pm$ 0.10 & 0.85 $\pm$ 0.02 & 0.79 $\pm$ 0.11 & 0.79 $\pm$ 0.09 & 0.81 $\pm$ 0.03 \\
       $\boldsymbol{20}$& 0.83 $\pm$ 0.07 & 0.73 $\pm$ 0.05 & 0.72 $\pm$ 0.07 & 0.77 $\pm$ 0.10 & 0.73 $\pm$ 0.05 & 0.75 $\pm$ 0.05 & 0.69 $\pm$ 0.06 \\
       $\boldsymbol{25}$& 0.76 $\pm$ 0.04 & 0.72 $\pm$ 0.10 & 0.71 $\pm$ 0.06 & 0.63 $\pm$ 0.08 & 0.68 $\pm$ 0.04 & 0.71 $\pm$ 0.07 & 0.66 $\pm$ 0.04 \\
       $\boldsymbol{30}$ & 0.76 $\pm$ 0.04 & 0.74 $\pm$ 0.04 & 0.72 $\pm$ 0.07 & 0.70 $\pm$ 0.10 & 0.66 $\pm$ 0.05 & 0.68 $\pm$ 0.04 & 0.64 $\pm$ 0.07 \\
       $\boldsymbol{35}$& 0.68 $\pm$ 0.02 & 0.65 $\pm$ 0.07 & 0.72 $\pm$ 0.06 & 0.65 $\pm$ 0.03 & 0.66 $\pm$ 0.05 & 0.61 $\pm$ 0.07 & 0.64 $\pm$ 0.09 \\
       $\boldsymbol{40}$ & 0.64 $\pm$ 0.03 & 0.69 $\pm$ 0.05 & 0.67 $\pm$ 0.05 & 0.62 $\pm$ 0.03 & 0.63 $\pm$ 0.06 & 0.58 $\pm$ 0.07 & 0.60 $\pm$ 0.07 \\
       $\boldsymbol{45}$ & 0.65 $\pm$ 0.04 & 0.68 $\pm$ 0.08 & 0.58 $\pm$ 0.04 & 0.61 $\pm$ 0.03 & 0.64 $\pm$ 0.04 & 0.59 $\pm$ 0.04 & 0.61 $\pm$ 0.06 \\
       $\boldsymbol{50}$ & 0.68 $\pm$ 0.05 & 0.63 $\pm$ 0.05 & 0.64 $\pm$ 0.06 & 0.63 $\pm$ 0.05 & 0.66 $\pm$ 0.05 & 0.59 $\pm$ 0.08 & 0.64 $\pm$ 0.07 \\
\bottomrule
\end{tabular}
}
\end{table}

\begin{table}[ht]
  \centering
  \caption{CSI Score of Algorithm \ref{alg} for increasing number of potential causes $m$, and different noise-to-signal ration (\textsc{NSR}). Again, we observe that our method is robust to increasing \textsc{NSR}.}
  \label{table:snthetic3}
{\scriptsize
\begin{tabular}{lccccccc}
\toprule
   & \multicolumn{7}{c}{\textbf{CSI Score}} \\
\cmidrule(lr){2-8}
$\boldsymbol{\feat}$    & $\boldsymbol{\nsr=0}$            &$\boldsymbol{\nsr=0.05}$        & $\boldsymbol{\nsr=0.1}$          &$\boldsymbol{\nsr=0.15}$        &$\boldsymbol{\nsr=0.2}$         &$\boldsymbol{\nsr=0.25}$        & $\boldsymbol{\nsr=0.3}$         \\
\hline
      $\boldsymbol{5}$ & 0.57 $\pm$ 0.07 & 0.71 $\pm$ 0.28 & 0.63 $\pm$ 0.19 & 0.86 $\pm$ 0.12 & 0.69 $\pm$ 0.10 & 0.73 $\pm$ 0.12 & 0.91 $\pm$ 0.17 \\
    $\boldsymbol{10}$ & 0.98 $\pm$ 0.04 & 0.86 $\pm$ 0.14 & 0.95 $\pm$ 0.06 & 0.69 $\pm$ 0.21 & 0.80 $\pm$ 0.09 & 0.87 $\pm$ 0.07 & 0.82 $\pm$ 0.14 \\
    $\boldsymbol{15}$ & 0.80 $\pm$ 0.07 & 0.77 $\pm$ 0.11 & 0.75 $\pm$ 0.16 & 0.68 $\pm$ 0.06 & 0.59 $\pm$ 0.19 & 0.57 $\pm$ 0.15 & 0.59 $\pm$ 0.03 \\
     $\boldsymbol{20}$ & 0.66 $\pm$ 0.12 & 0.52 $\pm$ 0.12 & 0.46 $\pm$ 0.07 & 0.56 $\pm$ 0.10 & 0.46 $\pm$ 0.08 & 0.51 $\pm$ 0.07 & 0.39 $\pm$ 0.09 \\
     $\boldsymbol{25}$ & 0.51 $\pm$ 0.06 & 0.45 $\pm$ 0.12 & 0.43 $\pm$ 0.09 & 0.37 $\pm$ 0.09 & 0.38 $\pm$ 0.05 & 0.41 $\pm$ 0.08 & 0.33 $\pm$ 0.06 \\
     $\boldsymbol{30}$  & 0.47 $\pm$ 0.05 & 0.50 $\pm$ 0.07 & 0.46 $\pm$ 0.12 & 0.38 $\pm$ 0.08 & 0.35 $\pm$ 0.08 & 0.34 $\pm$ 0.05 & 0.31 $\pm$ 0.07 \\
     $\boldsymbol{35}$  & 0.42 $\pm$ 0.04 & 0.31 $\pm$ 0.06 & 0.39 $\pm$ 0.07 & 0.29 $\pm$ 0.07 & 0.30 $\pm$ 0.09 & 0.22 $\pm$ 0.09 & 0.26 $\pm$ 0.09 \\
     $\boldsymbol{40}$  & 0.32 $\pm$ 0.06 & 0.38 $\pm$ 0.05 & 0.33 $\pm$ 0.08 & 0.29 $\pm$ 0.06 & 0.24 $\pm$ 0.06 & 0.20 $\pm$ 0.07 & 0.19 $\pm$ 0.11 \\
     $\boldsymbol{45}$  & 0.33 $\pm$ 0.10 & 0.34 $\pm$ 0.07 & 0.20 $\pm$ 0.02 & 0.22 $\pm$ 0.05 & 0.25 $\pm$ 0.03 & 0.20 $\pm$ 0.05 & 0.19 $\pm$ 0.06 \\
     $\boldsymbol{50}$ & 0.33 $\pm$ 0.03 & 0.29 $\pm$ 0.06 & 0.29 $\pm$ 0.06 & 0.23 $\pm$ 0.04 & 0.26 $\pm$ 0.06 & 0.21 $\pm$ 0.08 & 0.26 $\pm$ 0.07 \\
\bottomrule
\end{tabular}
}
\end{table}
\begin{table}[ht]
  \centering
  \caption{F1 Score of the \alg for increasing number of potential causes $m$, and different noise-to-signal ration (\textsc{NSR}). Interestingly, our method maintains a good F1 score for increasing \textsc{NSR}.}
  \label{table:snthetic2}
{\scriptsize
\begin{tabular}{lccccccc}
\toprule
   & \multicolumn{7}{c}{\textbf{F1 Score}} \\
\cmidrule(lr){2-8}
$\boldsymbol{\feat}$ & $\boldsymbol{\nsr=0}$            &$\boldsymbol{\nsr=0.05}$        & $\boldsymbol{\nsr=0.1}$          &$\boldsymbol{\nsr=0.15}$        &$\boldsymbol{\nsr=0.2}$         &$\boldsymbol{\nsr=0.25}$        & $\boldsymbol{\nsr=0.3}$         \\
\hline
       $\boldsymbol{5}$ & 0.73 $\pm$ 0.06 & 0.79 $\pm$ 0.20 & 0.75 $\pm$ 0.12 & 0.92 $\pm$ 0.07 & 0.81 $\pm$ 0.07 & 0.84 $\pm$ 0.08 & 0.95 $\pm$ 0.11 \\
       $\boldsymbol{10}$ & 0.99 $\pm$ 0.02 & 0.92 $\pm$ 0.09 & 0.98 $\pm$ 0.03 & 0.80 $\pm$ 0.15 & 0.89 $\pm$ 0.06 & 0.93 $\pm$ 0.04 & 0.90 $\pm$ 0.09 \\
       $\boldsymbol{15}$ & 0.89 $\pm$ 0.05 & 0.86 $\pm$ 0.08 & 0.85 $\pm$ 0.11 & 0.81 $\pm$ 0.04 & 0.73 $\pm$ 0.15 & 0.72 $\pm$ 0.12 & 0.74 $\pm$ 0.03 \\
       $\boldsymbol{20}$ & 0.79 $\pm$ 0.08 & 0.67 $\pm$ 0.10 & 0.62 $\pm$ 0.07 & 0.71 $\pm$ 0.09 & 0.63 $\pm$ 0.07 & 0.67 $\pm$ 0.06 & 0.56 $\pm$ 0.09 \\
       $\boldsymbol{25}$ & 0.68 $\pm$ 0.05 & 0.61 $\pm$ 0.11 & 0.60 $\pm$ 0.09 & 0.53 $\pm$ 0.11 & 0.55 $\pm$ 0.05 & 0.57 $\pm$ 0.08 & 0.49 $\pm$ 0.07 \\
       $\boldsymbol{30}$ & 0.64 $\pm$ 0.05 & 0.66 $\pm$ 0.06 & 0.62 $\pm$ 0.10 & 0.54 $\pm$ 0.09 & 0.51 $\pm$ 0.09 & 0.50 $\pm$ 0.06 & 0.46 $\pm$ 0.09 \\
       $\boldsymbol{35}$ & 0.59 $\pm$ 0.05 & 0.47 $\pm$ 0.07 & 0.56 $\pm$ 0.07 & 0.45 $\pm$ 0.08 & 0.46 $\pm$ 0.11 & 0.35 $\pm$ 0.11 & 0.41 $\pm$ 0.11 \\
       $\boldsymbol{40}$ & 0.49 $\pm$ 0.07 & 0.55 $\pm$ 0.05 & 0.49 $\pm$ 0.08 & 0.45 $\pm$ 0.07 & 0.38 $\pm$ 0.08 & 0.33 $\pm$ 0.09 & 0.30 $\pm$ 0.15 \\
       $\boldsymbol{45}$ & 0.49 $\pm$ 0.11 & 0.50 $\pm$ 0.08 & 0.34 $\pm$ 0.03 & 0.36 $\pm$ 0.06 & 0.40 $\pm$ 0.04 & 0.33 $\pm$ 0.06 & 0.32 $\pm$ 0.08 \\
       $\boldsymbol{50}$ & 0.50 $\pm$ 0.03 & 0.44 $\pm$ 0.07 & 0.44 $\pm$ 0.08 & 0.38 $\pm$ 0.05 & 0.41 $\pm$ 0.07 & 0.34 $\pm$ 0.11 & 0.40 $\pm$ 0.09 \\
\bottomrule
\end{tabular}
}
\end{table}

\begin{figure}[t]
    \centering
    \includegraphics[width=6in]
    {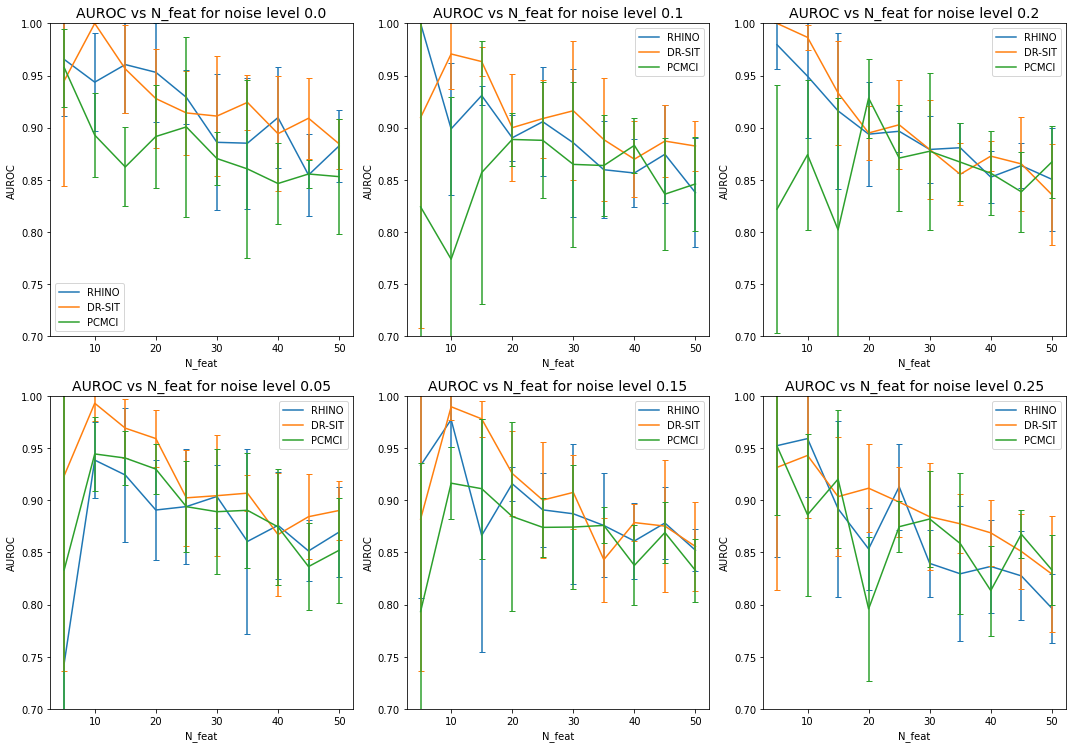}
    \caption{AUROC metric for DR-SIT, RHINO and PCMCI for various noise levels on the synthetic dataset.}
    \label{AUROC_synthetic_dataset}
\end{figure}

\subsection{Performance in Low-Sample Regimes}
\label{sec:low_sample}
 The double robustness property enables our algorithm to rely on simple estimators with low statistical complexity. As a result, our method shows more consistent performance in low-sample regimes as opposed to existing approaches that are based on overparameterized models demanding so many data points (\Cref{fig:exp_observations}).

\begin{figure*}[h!]
   \centering
   \begin{subfigure}[t]{2.7in}
       \centering
       \includegraphics[width=2.7in]{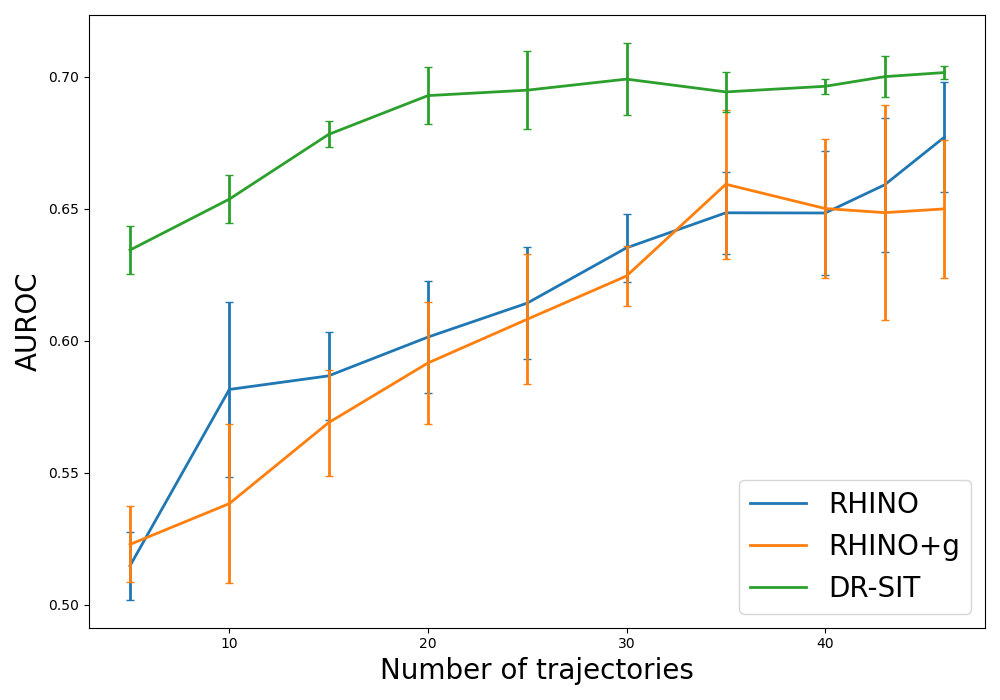}
       \caption{Task: E.Coli 1}\label{fig:1a}        
   \end{subfigure}
   \begin{subfigure}[t]{2.7in}
       \centering
       \includegraphics[width=2.7in]{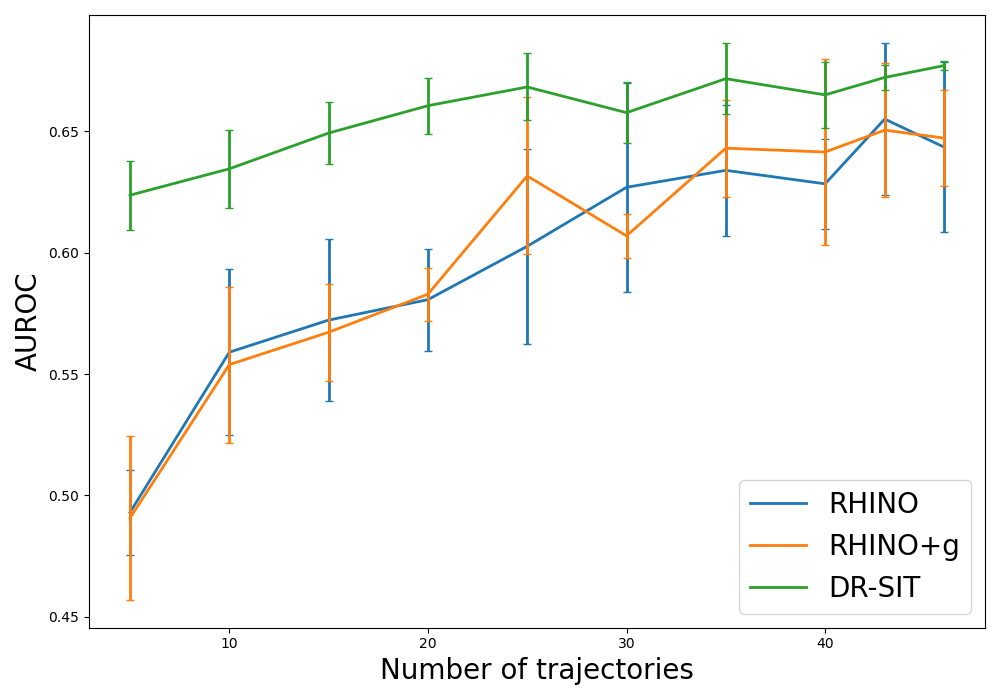}
       \caption{Task: E.Coli 2}\label{fig:1b}    
  \end{subfigure}
  \begin{subfigure}[t]{2.7in}
       \centering
       \includegraphics[width=2.7in]{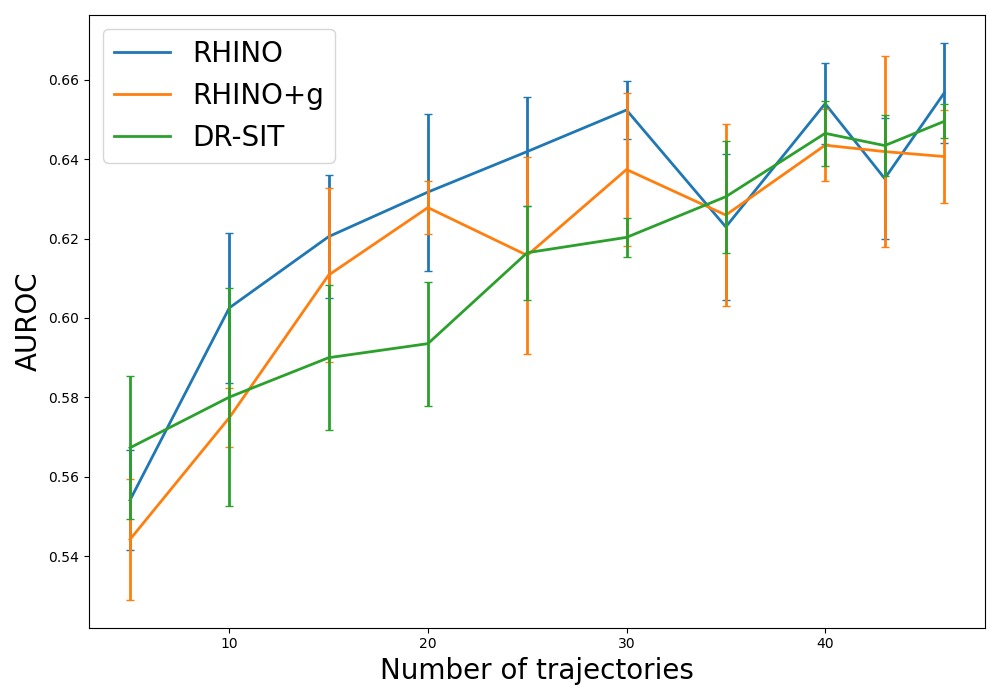}
       \caption{Task: Yeast 1}\label{fig:1c}    
  \end{subfigure}
  \begin{subfigure}[t]{2.7in}
       \centering
       \includegraphics[width=2.7in]{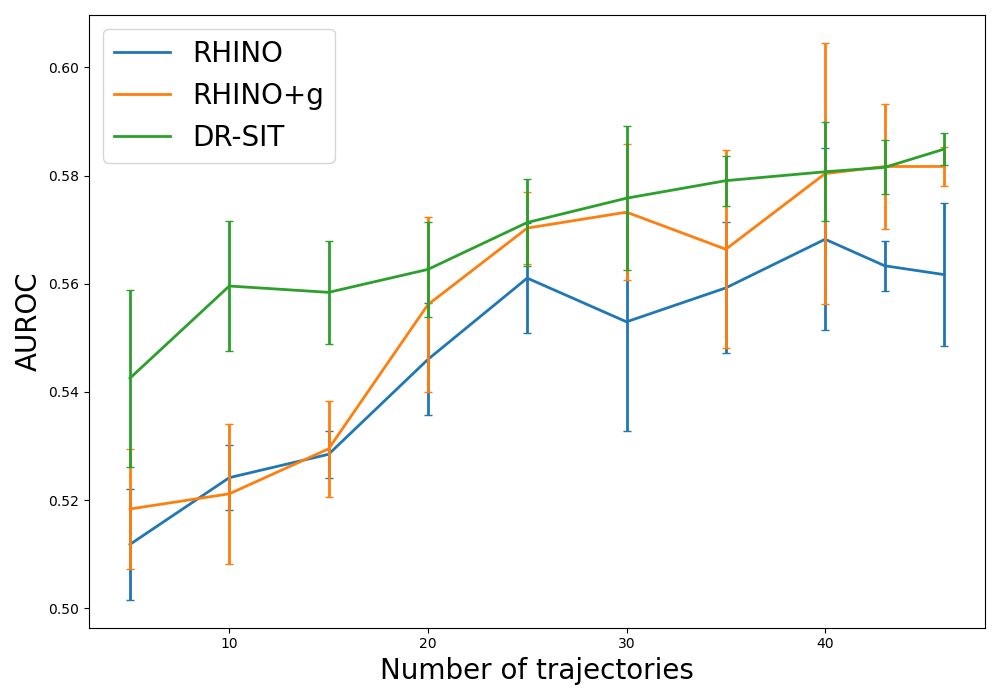}
       \caption{Task: Yeast 2}\label{fig:1d}    
   \end{subfigure}
   \begin{subfigure}[t]{2.7in}
       \centering
       \includegraphics[width=2.7in]{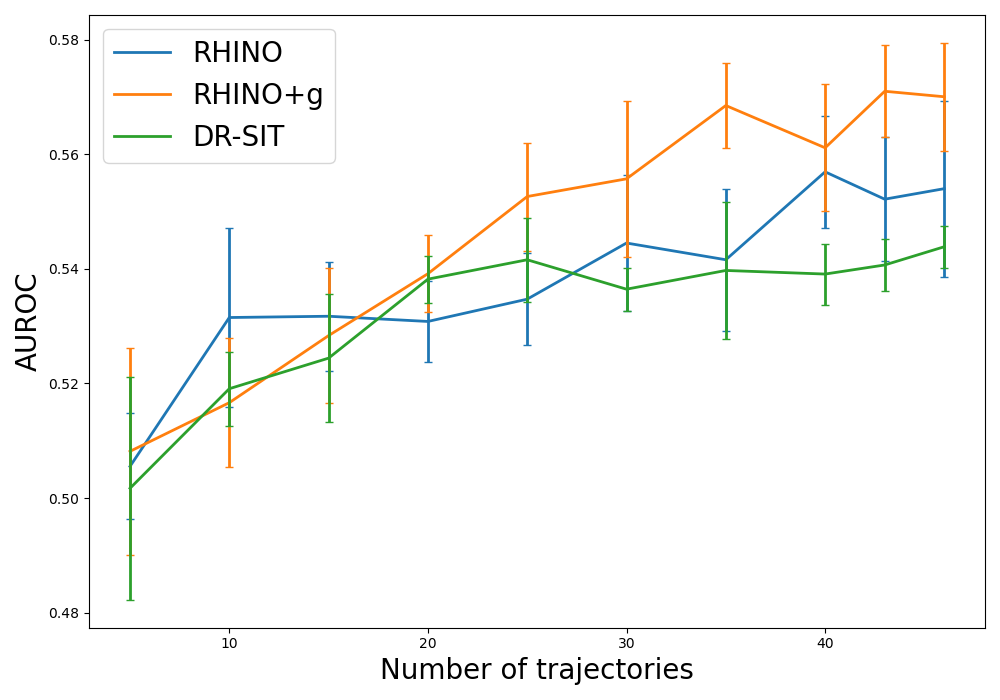}
       \caption{Task: Yeast 3}\label{fig:1e}    
   \end{subfigure}
   \caption{This figure demonstrates the consistent performance of DR-SIT w.r.t number of observations compared to state-of-the-art methods Rhino and Rhino+g. Note that Rhino and Rhino+g are built on neural networks. DR-SIT significantly outperforms Rhino and Rhino+g in E.Coli 1 and E.Coli 2 and shows competitive results in Yeast 1. 
   Thanks to the double robustness property of DR-SIT, the dependence of our algorithm on the estimator is much lower than the well-established approaches. In this regard, DR-SIT with a simple kernel regression with polynomial kernels has superior performance compared to state-of-the-art methods Rhino and Rhino+g. This superiority gets magnified in the low number of observation regimes due to the high sample complexity required by Rhino and Rhino+g.}\label{fig:exp_observations}
\end{figure*}

Moreover, in \cref{fig:auroc_vs_time_Ecoli1,fig:auroc_vs_time_Ecoli2,fig:auroc_vs_time_Yeast1,fig:auroc_vs_time_Yeast2,fig:auroc_vs_time_Yeast3} we compare the progression of AUROC score vs the total runtime for DR-SIT vs RHINO in various combinations of tasks and training set sizes. The hardware specifications are described in \cref{sec:runtime} and the training hyperparameter settings for RHINO in \cref{sec:exp_semi}. The runtime of DR-SIT is always less than 1 minute (so afterwards AUROC curve is plotted as a constant) where each epoch for RHINO takes about 30 seconds independently of the training dataset size.


\begin{figure}
    \centering
    \includegraphics[height=0.9\textheight]{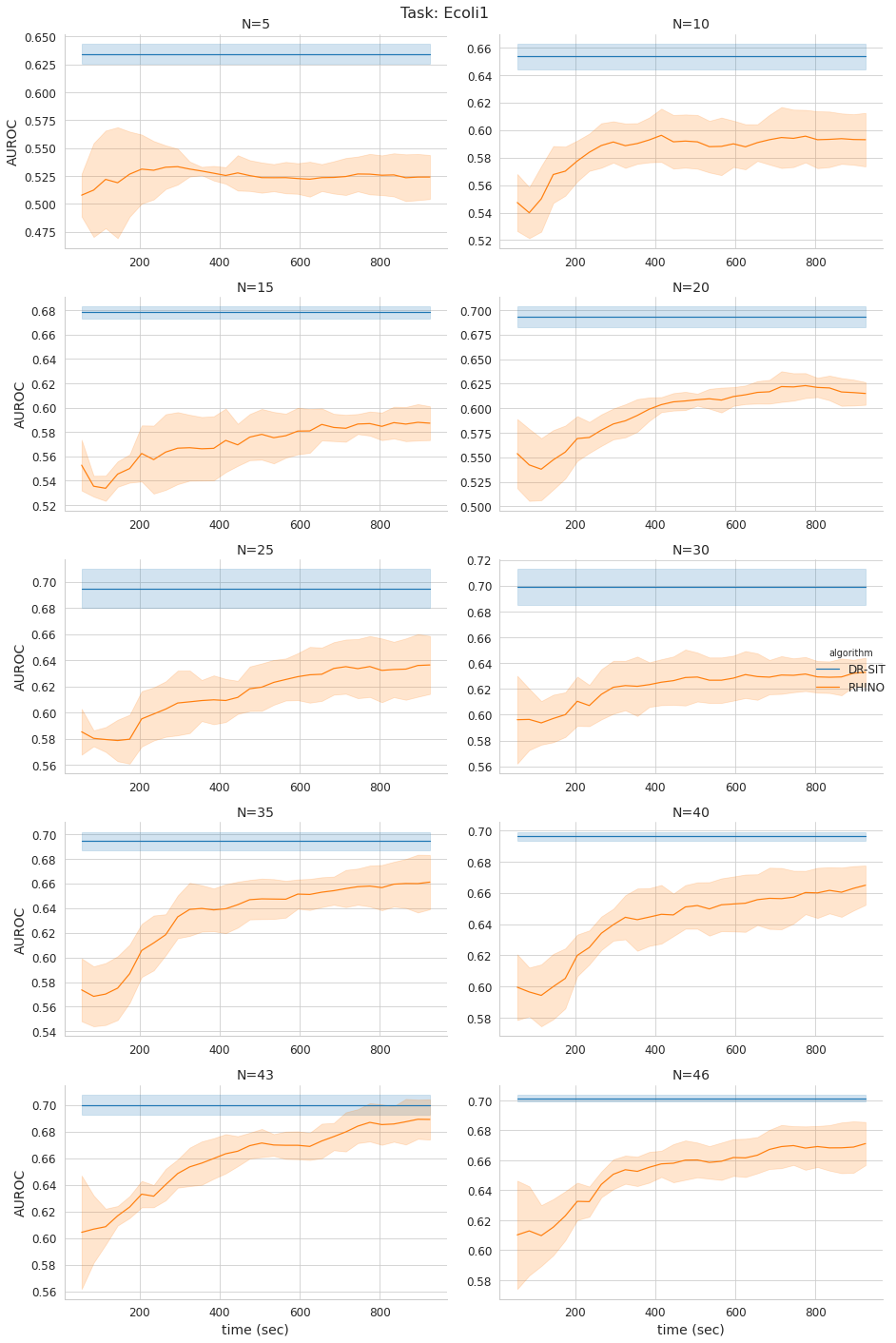}
    \caption{AUROC vs time (in secs) for DR-SIT (blue) vs RHINO (orange) for Ecoli 1 task and various numbers of training observations (number of trajectories). }
    \label{fig:auroc_vs_time_Ecoli1}
\end{figure}

\begin{figure}
    \centering
    \includegraphics[height=0.9\textheight]{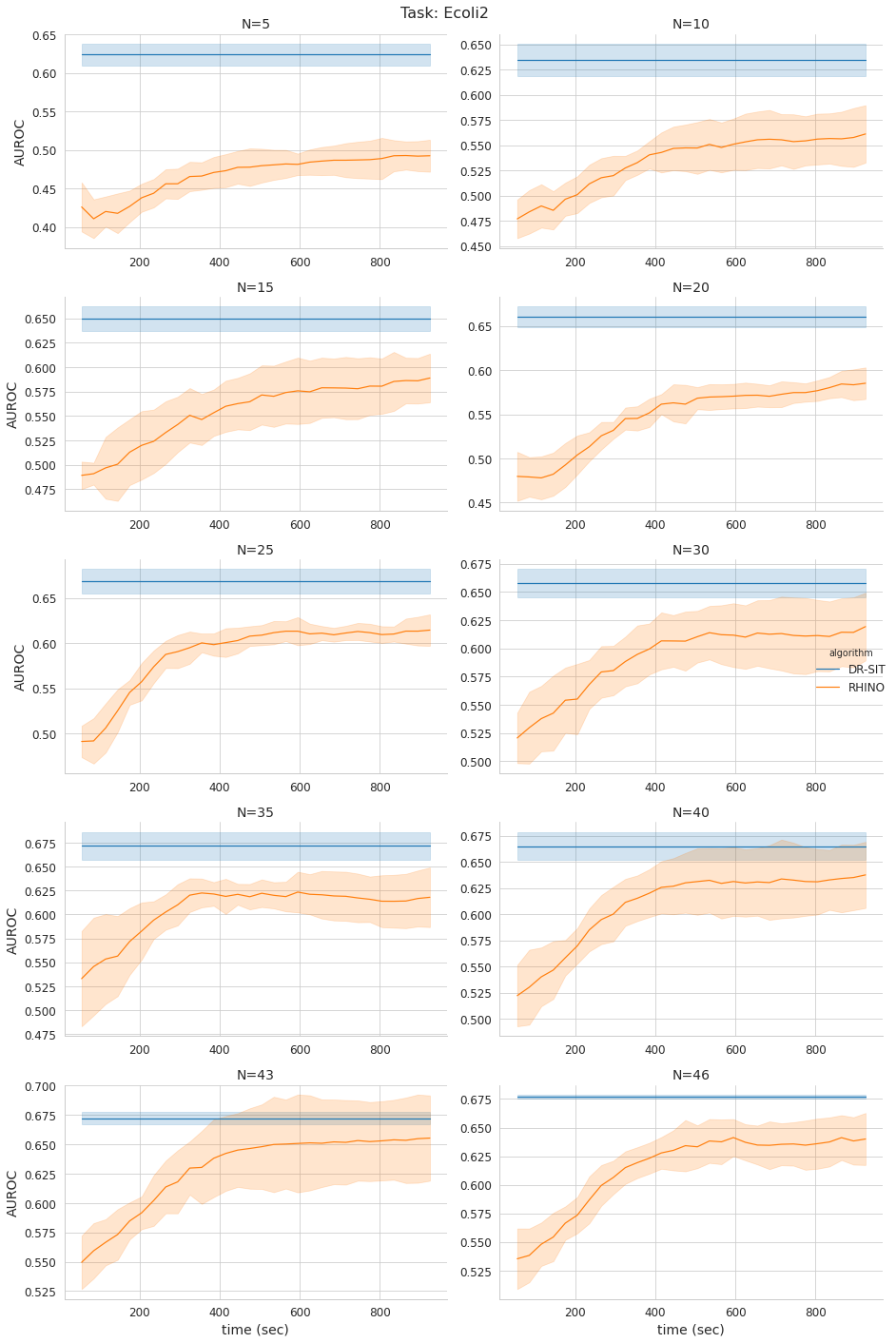}
    \caption{AUROC vs time (in secs) for DR-SIT (blue) vs RHINO (orange) for Ecoli 2 task and various numbers of training observations (number of trajectories). }
    \label{fig:auroc_vs_time_Ecoli2}
\end{figure}

\begin{figure}
    \centering
    \includegraphics[height=0.9\textheight]{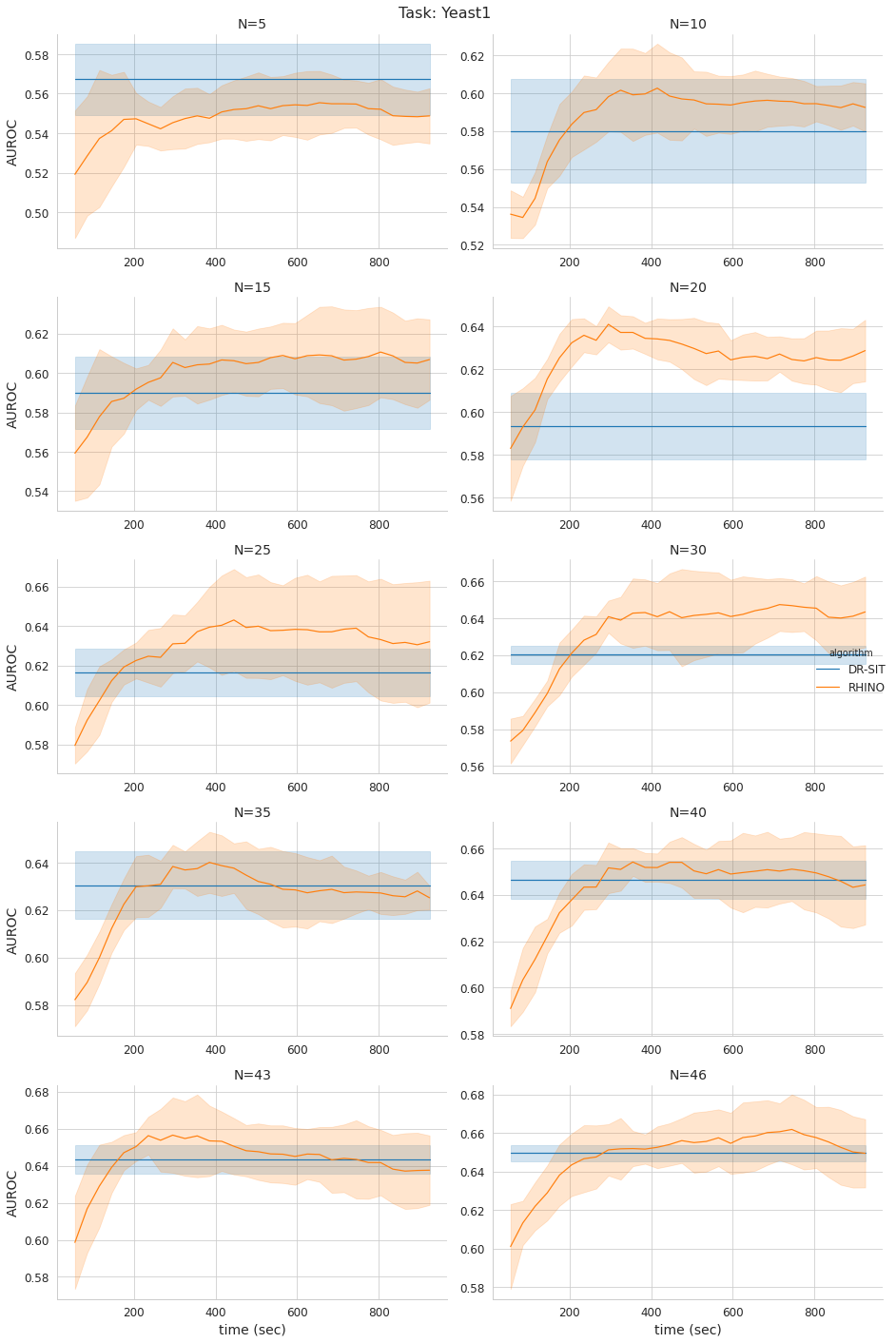}
    \caption{AUROC vs time (in secs) for DR-SIT (blue) vs RHINO (orange) for Yeast 1 task and various numbers of training observations (number of trajectories). }
    \label{fig:auroc_vs_time_Yeast1}
\end{figure}

\begin{figure}
    \centering
    \includegraphics[height=0.9\textheight]{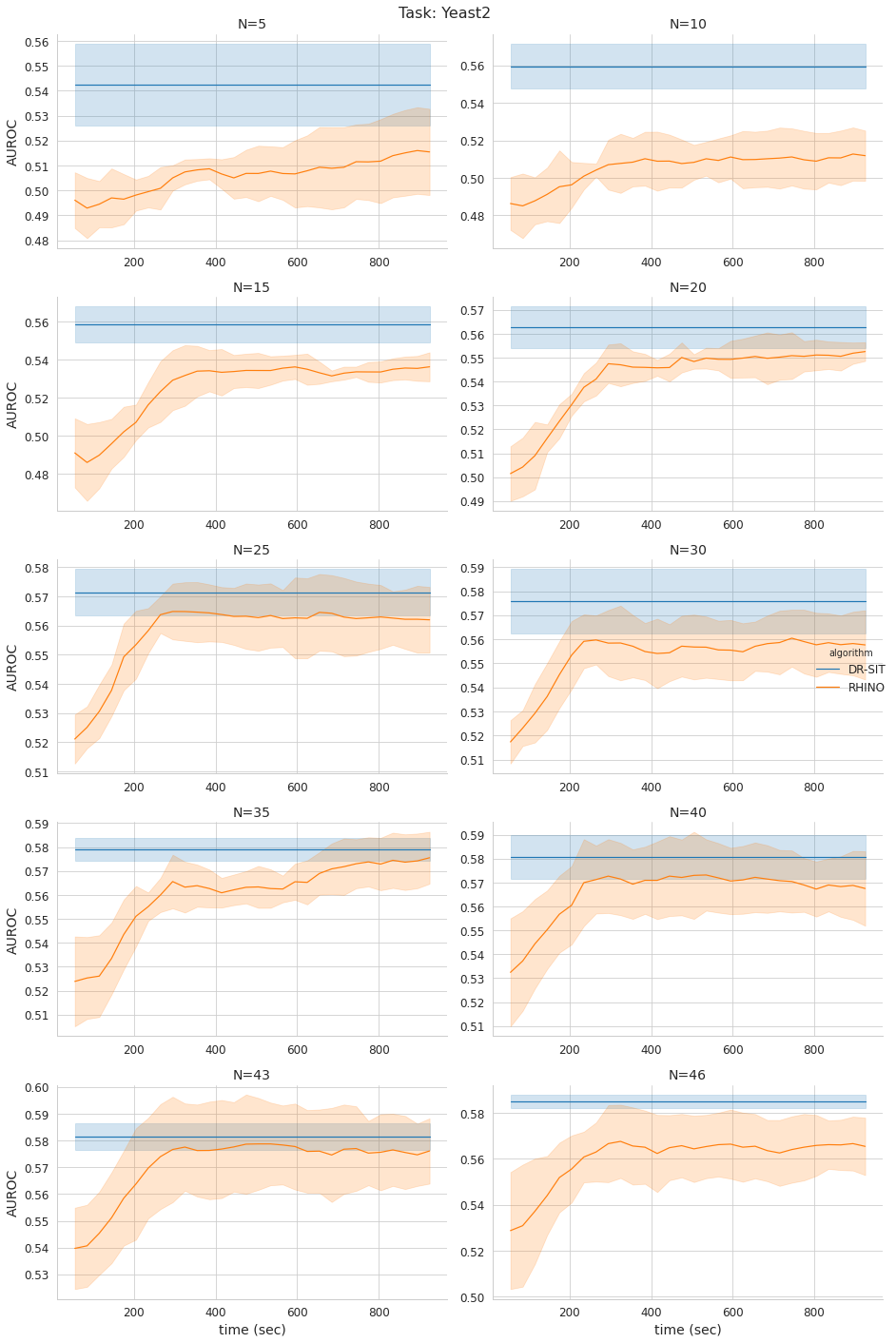}
    \caption{AUROC vs time (in secs) for DR-SIT (blue) vs RHINO (orange) for Yeast 2 task and various numbers of training observations (number of trajectories). }
    \label{fig:auroc_vs_time_Yeast2}
\end{figure}

\begin{figure}
    \centering
    \includegraphics[height=0.9\textheight]{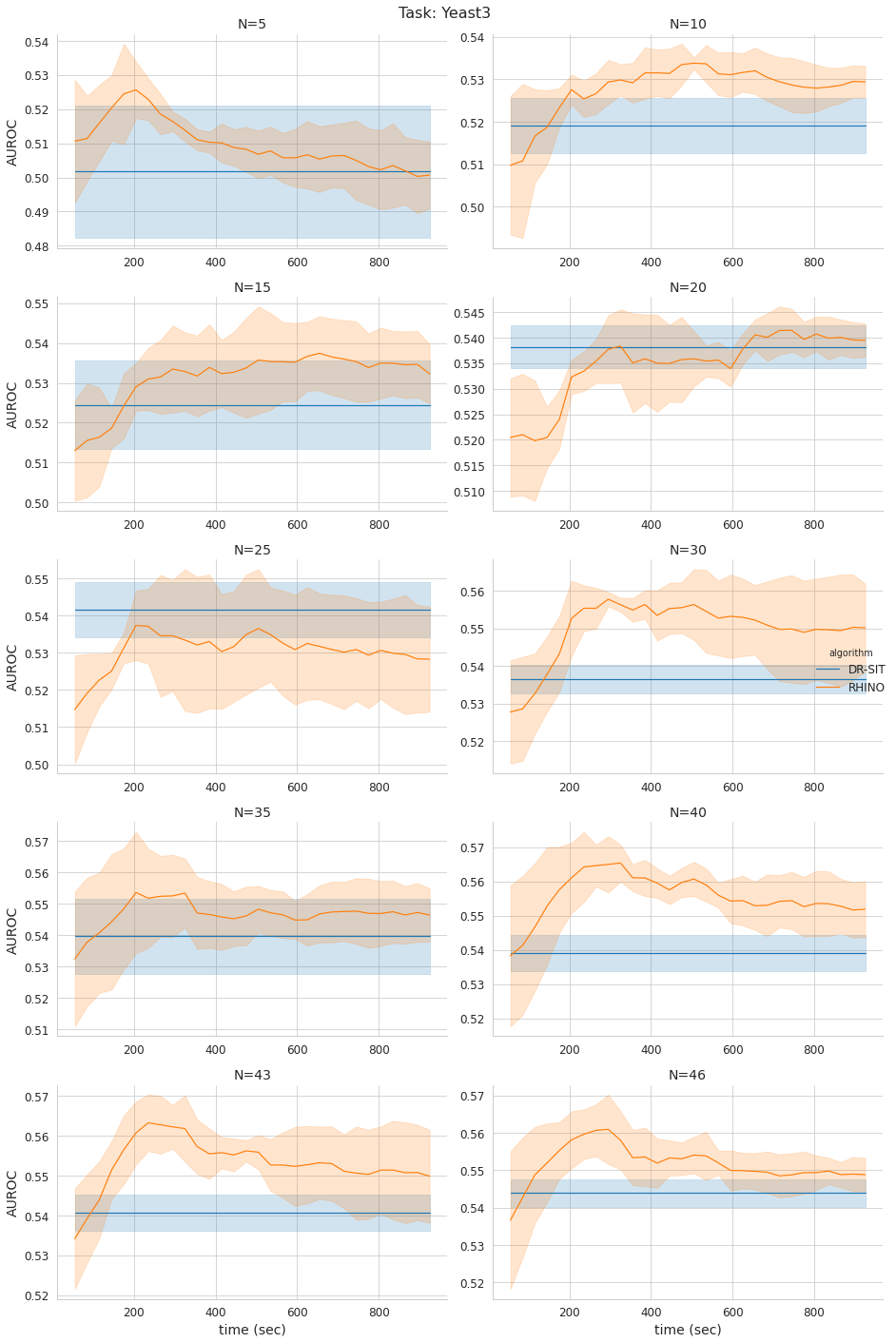}
    \caption{AUROC vs time (in secs) for DR-SIT (blue) vs RHINO (orange) for Yeast 3 task and various numbers of training observations (number of trajectories). }
    \label{fig:auroc_vs_time_Yeast3}
\end{figure}

\newpage

\vfill

\end{document}